\newcommand*{\sortentry}[1]{%
  \if@filesw
    \immediate\write\@auxout{\string\scNAT@aux@sortentry{#1}}%
  \fi}
\newcommand*{\scNAT@aux@sortentry}{%
  \listgadd{\scNAT@bibsortlist}}
\newcommand*{\scNAT@bibsortlist}{}
\newcommand*{\scNAT@citekeys}{}
\newcommand*{\scNAT@writetocitelistsort}[1]{%
  \ifinlist{#1}{\scNAT@citekeys}
    {\ifdefvoid{\NAT@cite@list}
       {\def\NAT@cite@list{#1}}
       {\expandafter\def\expandafter\NAT@cite@list\expandafter{\NAT@cite@list,#1}}%
     \listgadd{\scNAT@foundkeys}{#1}}
    {}}
\newcommand*{\scNAT@writetocitelistforgotten}[1]{%
  \ifinlist{#1}{\scNAT@foundkeys}
    {}
    {\ifdefvoid{\NAT@cite@list}
       {\def\NAT@cite@list{#1}}
       {\expandafter\def\expandafter\NAT@cite@list\expandafter{\NAT@cite@list,#1}}}}
\newcommand*{\scNAT@sortcites}[1]{%
  \let\NAT@cite@list\@empty
  \let\scNAT@citekeys\@empty
  \let\scNAT@foundkeys\@empty
  \forcsvlist{\listadd{\scNAT@citekeys}}{#1}%
  \forlistloop{\scNAT@writetocitelistsort}{\scNAT@bibsortlist}%
  \forlistloop{\scNAT@writetocitelistforgotten}{\scNAT@citekeys}%
}
\def\NAT@citex%
     \edef\@citeb{\expandafter\@firstofone\@citeb\@empty}%
\def\NAT@date{}}%
\let\NAT@last@nm=\NAT@nm\let\NAT@last@yr=\NAT@year
      \ifNAT@longnames\@ifundefined{bv@\@citeb\@extra@b@citeb}{%
        \let\NAT@name=\NAT@all@names
        \global\@namedef{bv@\@citeb\@extra@b@citeb}{}}{}%
\let\NAT@nm\NAT@all@names\else
       \let\NAT@nm\NAT@name\fi
         \@citea\NAT@hyper@{\NAT@nmfmt{\NAT@nm}\NAT@date}%
              \def\NAT@temp{{?}}%
\NAT@exlab\PackageWarningNoLine{natbib}%
               {Multiple citation on page \thepage: same authors and
               year\MessageBreak without distinguishing extra
               letter,\MessageBreak appears as question mark}\fi
           \@citea\NAT@hyper@{%
             \NAT@nmfmt{\NAT@nm}%
             \hyper@natlinkbreak{%
               \NAT@aysep\NAT@spacechar}{\@citeb\@extra@b@citeb
             }%
             \NAT@date
           }%
\@citea\NAT@hyper@{\NAT@nmfmt{\NAT@nm}}%
\@citea\NAT@hyper@{\NAT@date}%
\@citea\NAT@hyper@{\NAT@alias}%
          \@citea\NAT@hyper@{\NAT@nmfmt{\NAT@nm}}%
              \def\NAT@temp{{?}}%
\NAT@exlab\PackageWarningNoLine{natbib}%
               {Multiple citation on page \thepage: same authors and
               year\MessageBreak without distinguishing extra
               letter,\MessageBreak appears as question mark}\fi
           \@citea\NAT@hyper@{%
             \NAT@nmfmt{\NAT@nm}%
             \hyper@natlinkbreak{\NAT@spacechar\NAT@@open\if*#1*\else#1\NAT@spacechar\fi}%
               {\@citeb\@extra@b@citeb}%
             \NAT@date
           }%
\@citea\NAT@hyper@{\NAT@nmfmt{\NAT@nm}}%
\@citea\NAT@hyper@{\NAT@date}%
\@citea\NAT@hyper@{\NAT@alias}%
\newtheorem{definition}{Definition}[section]
\newtheorem{lemma}{Lemma}[section]
\newtheorem{corollary}{Corollary}[section]
\newcommand{\para}[1]{\medskip \noindent {\bf #1}}
\newcommand{\bz}{\boldsymbol z}
\newcommand{\bZ}{\boldsymbol Z}
\newcommand{\bS}{\boldsymbol S}
\newcommand{\bA}{\boldsymbol A}
\newcommand{\bh}{\boldsymbol h}
\newcommand{\be}{\boldsymbol e}
\newcommand{\bY}{\boldsymbol Y}
\newcommand{\bx}{\boldsymbol x}
\newcommand{\bX}{\boldsymbol X}
\newcommand{\bD}{\boldsymbol D}
\newcommand{\bs}{\boldsymbol s}
\newcommand{\bzero}{\boldsymbol 0}
\newcommand{\bt}{\boldsymbol t}
\newcommand{\boldEta}{\boldsymbol H}
\newcommand{\bI}{\boldsymbol I}
\newcommand{\btheta}{\boldsymbol \theta}
\newcommand{\bOmega}{\boldsymbol \Omega}
\newcommand{\bmu}{\boldsymbol \mu}
\newcommand{\bzeta}{\boldsymbol \zeta}
\newcommand{\bSigma}{\boldsymbol \Sigma}
\begin{document}

\twocolumn[
\aistatstitle{Locally Differentially Private Bayesian Inference}
\aistatsauthor{Tejas Kulkarni$^{*,1}$ \And Joonas Jälkö$^{*,1}$ \And  Samuel Kaski$^{1,2}$ \And Antti Honkela$^{3}$ }
\aistatsaddress{Aalto University$^{1}$ \And  University of Manchester$^{2}$ \And University of Helsinki$^{3}$ }
]

%\graphicspath{{../figures/}}

\begin{abstract}
In recent years, local differential privacy (LDP) has  emerged as a technique of choice for privacy-preserving data collection in several scenarios when the aggregator is  not trustworthy. LDP provides client-side privacy by adding noise at the user's end. Thus, clients need not rely on the trustworthiness of the aggregator. 
In this work, we provide a noise-aware probabilistic modeling framework, which allows Bayesian inference to take into account the noise added for privacy under LDP, conditioned on locally perturbed observations.
Stronger privacy protection (compared to the central model) provided by LDP protocols comes at a much harsher privacy-utility trade-off. Our framework tackles several computational and statistical challenges posed by LDP for accurate uncertainty quantification under Bayesian settings. We demonstrate the efficacy of our framework in parameter estimation for univariate and multi-variate distributions as well as logistic and linear regression.
\end{abstract}

\section{Introduction}

In many practical settings, researchers only have access to small and sensitive data sets. In order to prepare against
overconfident incorrect findings in these settings, properly assessing the uncertainty due to finiteness of the sample becomes a
crucial component of the inference. Besides the uncertainty estimation for which there is a wide range of tools in the Bayesian
literature, also the privacy of the data subjects should be preserved.

\par In recent years, local differential privacy (LDP)~\citep{EGS:03,KLNRS:08} has become the gold-standard for inference
under untrusted aggregation scenarios, enabling strong client-side privacy protection. Despite rich history of investigation and large-scale deployments, LDP has been rarely paired with Bayesian inference. 

\par In this work, we initiate the study of developing methods for Bayesian inference under LDP constraints, and quantify the uncertainty as a function of the scale of the noise injected. We focus on the \emph{non-interactive} setting, where each client participates only in a single round of data collection.

\par A generic solution for performing Bayesian inference under LDP would be to try enforcing LDP in a DP variant of a general-purpose Markov chain Monte Carlo (MCMC) algorithm e.g.~DP-SGLD~\citep{WYFS:2015,LCLC:19}. This is not an attractive solution for the following reasons: the iterative nature of these sampling algorithms demands multiple rounds of LDP-compliant data collection. Next, the cost of client-side implementation of a new computationally demanding sampling algorithm could be hard to justify when it is possible to collect sufficiently accurate information for other methods with a lightweight implementation. Additionally, the privacy cost scales proportionally to the number of iterations, which in the case of SGLD is equivalent to the number of posterior samples.

\par In contrast to the centralized DP setting, under LDP each input or sufficient statistic is perturbed at client side, and the aggregator is forced to infer the parameters only with access to the privatized inputs. These constraints suggest that for a non-interactive setting, similarly to the frequentist counterparts, Bayesian inference should be decoupled from data collection and conditioned directly on the privatized inputs. 

\cite{BS:18} proposed a sufficient statistics based approach to correctly quantify the posterior uncertainty by making the model
\emph{noise-aware}, i.e. by including the (centralized) DP noise mechanism into the probabilistic model.

\par Our solution extends the applicability of noise-aware models to LDP.  The sheer magnitude of noise due to LDP ($\Omega(\frac{\sqrt{N}}{\epsilon})$ for a sample size of $N$, compared to $O(\frac{1}{\epsilon})$ in central DP) makes our LDP inference problem much more challenging than in \cite{BS:18}'s centralized solution. Under central DP, the earlier works are based on perturbed sum of sufficient statistics and therefore the latent data appear in the model as an aggregate over the individuals. In contrast, under LDP the number of latent variables (the true unperturbed data) grows linearly with the number of samples, since the aggregator can only observe perturbed data.

\subsection{Related work}
Following \cite{WM:10}'s pioneering work, several approaches that combine differential privacy (DP) and Bayesian inference
\citep{PFCW:16,HJDH:19,HDNDK:18,JDH:17,FGJWC:16,WYFS:2015} have been proposed. These works demonstrate that it is possible to
adapt Bayesian learning techniques to obey the privacy constraints of DP, but they do not attempt to address the
additional uncertainty the DP mechanism induces to the inference.

\cite{BS:18} showed how to make a model \emph{noise-aware} by including the DP noise mechanism as a part of the probabilistic
model describing the data generating process in a statistical inference task. Later, these techniques were applied for linear
regression models by \cite{BS:19} and for generalized linear models by \cite{KJKKH:21}. These works are based on centralized DP
which assumes the presence of a trusted curator to perturb and release the sensitive computation. 

Motivated by the success of large scale industrial deployments \citep{snap:18,DKY:17,appleLDP:17,rappor}, the topic of LDP observed a massive growth in the last 7 years. A large body of research has mostly focused on solving problems such as estimating \emph{uni-dimensional distributions} \citep{LWLLS:20,JKMW:19,WXYZHSSYL:19,NXYSSS:16}, \emph{histograms and joint distributions} \citep{YWLCS:21,KCS:19,ZWLHC:18,KCS:18,WBLJ:17,rappor2:16}, \emph{heavy hitters} \citep{BNST:20,BS:15} to \emph{empirical risk minimization} \citep{DSX:20,DX:2019,WGX1:2018,STU:2017}. For a more comprehensive review, we point readers to the surveys by \cite{XLLCN:20,WZFY:20}.

\par The work closest to ours is from \cite{SWSZW:19}. They perform Bayesian inference for Poisson factorization models under a weaker variant of LDP. While there is some overlap in the way we handle latent variables, we consider a completely different set of problems, i.e.  parameter estimation of univariate and multivariate distributions and linear and logistic regression, which have not been studied beyond the point estimators, despite such a great progress in LDP. 
\subsection{Contributions}
Our work makes the following contributions.
\begin{enumerate}
    \item We propose a probabilistic framework for Bayesian inference under LDP, which captures the entire data generating process including noise addition for LDP. Use of our methods does not require any changes in the existing LDP data collection protocols since these run as a downstream task at aggregator's end. 
    \item We avoid the explosion in the model complexity due to latent variables, by marginalizing them out from the likelihood calculations. (See Section \ref{sec:marg_input})
    \item Unlike most prior works, our $\epsilon$-DP parameter estimation model for Gaussian data learns both the mean and the variance in a single round. (See Section \ref{Sec:unidimensional_parameter_estimation})
    \item We prove a new tight $(\epsilon,\delta)$-DP bound for releasing the sufficient statistics for linear regression. (See Lemma \ref{lemma:sensitivity_linear_regression})
    \item We provide a sufficient condition for the tractability of the integral marginalizing the latent input, under Gaussian perturbation. (See Lemma \ref{lemma:suff_condition_marginalization})
    \item With extensive experiments, we demonstrate that our private posteriors are well-calibrated and the posterior inference can provide higher utility than private (point estimate) baselines in strong privacy/low sample size  cases.
\end{enumerate}

\section{Background and Problem formulation}
\subsection{Local differential privacy (LDP)}
\label{sec:DP}

The classic centralized model of differential privacy presupposes the presence of a trusted aggregator that processes the private information of individuals and releases a noisy version of the computation. The local model instead safeguards user's inputs by considering the setting where the aggregator may be untrustworthy. Then, the user needs to add noise locally before sharing with the aggregator. Consider two users each having an abstract data points, $\bx$ and $\bx'$, from a domain $\mathcal{D}$.
\begin{definition}
For $\epsilon \geq 0, \delta \ge 0$, a randomized mechanism $\mathcal{M}$ satisfies $(\epsilon,\delta)$-local differential privacy~\citep{KLNRS:08} if for any two points $\bx, \bx' \in \mathcal{D}$, and for all outputs $Z \subseteq Range(\mathcal{M})$, the following constraint holds: 
\begin{equation}\label{eq:dp}
    \Pr[\mathcal{M}(\bx) \in Z] \leq  \exp(\epsilon) \times \Pr[\mathcal{M}(\bx') \in Z]  + \delta.
\end{equation}
\end{definition}
Lower value of $\epsilon$ and $\delta$ provides a stronger protection of privacy. When $\delta=0$, $\mathcal{M}$ is
said to satisfy pure-LDP or $\epsilon$-LDP. 

Among many desirable properties of a privacy definition, (L)DP degrades gracefully under repeated use and is immune to
post-processing. The latter means that the privacy loss of $\mathcal{M}$ cannot be increased by applying any randomized
function independent of the data to $\mathcal{M}$’s output.

For practical implementations of differential privacy, we need to quantify the worst-case impact of an individual's record on
the output of a function. This quantity is referred to as \emph{sensitivity} and is defined as follows:
\begin{definition} For all $\bx, \bx' \in \mathcal{D}$, the $L_p$ sensitivity of a function $\bt:\mathbb{R}^{d} \rightarrow
\mathbb{R}^{m}$ is defined as 
%over all $\bX$'s and its neighbors $\bX'$'s 
$$
\Delta_{p}(\bt) = \max_{\bx, \bx' \in \mathcal{D}}|| \bt(\bx) - \bt(\bx')||_{p}.
$$
\end{definition}
%\tejas{The post-processing property is not yet defined but referred elsewhere.}

We now review some basic LDP perturbation primitives that will be referred to in the paper.

\para{Analytic Gaussian Mechanism~\citep{analyticGaussian}}. Classical Gaussian mechanism satisfies $(\epsilon,\delta)$-DP when $\epsilon \in (0,1)$. Much higher $\epsilon$ values are commonly used in practice in LDP scenarios. 
\cite{analyticGaussian} proposed an algorithmic noise calibration strategy based on the Gaussian cumulative density function (CDF) to obtain a mechanism that adds the least amount of Gaussian noise needed for $(\epsilon, \delta)$-DP:

\begin{definition}{(Analytic Gaussian Mechanism)} For any $\epsilon \geq 0, \delta \in [0,1]$, a mechanism $\mathcal{M}(\bD)$ which releases  $\bZ = \bt(\bD) + \bzeta$ with sensitivity $\Delta_{2}(\bt) =: \Delta_{2}$, satisfies $(\epsilon,\delta)$-DP with $\bzeta \sim \mathcal{N}(\bzero,\bI\sigma_{*}^{2} )$ iff
\begin{equation} \label{eq:agm}
\begin{aligned} 
    \Phi\left( \frac{\Delta_{2}}{2\sigma_{*}}- \frac{\epsilon \sigma_{*}}{\Delta_{2}} \right) -  \exp(\epsilon)\Phi\left(- \frac{\Delta_{2}}{2\sigma_{*}}- \frac{\epsilon \sigma_{*}}{\Delta_{2}} \right)  \leq \delta.
\end{aligned}
\end{equation}
\end{definition}

We now recall two $\epsilon$-DP primitives for releasing uni-variate inputs. 

\para{Laplace Mechanism~\citep{DworkMNS06}.}

\begin{definition}
For any $\epsilon \geq 0$, the Laplace mechanism $\mathcal{M}(\mathcal{D})$ releases $Z=t(\mathcal{D})+\zeta$, where $\zeta$ is sampled from a zero-mean Laplace distribution with the scale parameter $\frac{\Delta_{1}(t)}{\epsilon}$.
\end{definition}

\para{Randomized Response (RR)~\citep{Warner:65}.}  
The 1-bit randomized response is a utility-wise optimal \citep{CSS:12,KSV:16} way of releasing the sum of bits under LDP.
When $ x \in \{0, 1\}$, a user satisfies $\epsilon$-DP by reporting the perturbed input $z$ from the following distribution: 
$$  z=
\begin{cases}
x , \text{with probability }  p=\frac{\exp(\epsilon)}{1+\exp(\epsilon)} > \frac{1}{2},  \\
1-x, \text{with probability }  1-p. \\
\end{cases}
$$  
The unbiased point estimate for the mean of a sample of size $N$ can be recovered by $\frac{\frac{1}{N}\sum_{i \in[N]} z_i +p-1}{2p-1}.$

\subsection{Our setting and goal} In this work, we assume the \emph{non-interactive} untrusted aggregation setting, commonly considered in LDP works. We have $N$ non-colluding users, each user $i \in [N]$ holding a private data point $\bx_{i}$. We model the $\{\bx_i\}_{i=1}^N$ as an independent and identically distributed (iid) sample from a probability distribution with density function $f$ parameterized by $\btheta$.

\par Each user $i \in [N]$ locally perturbs $\bx_{i}$ to obtain their privatized version $\bz_{i}$ using a LDP compliant mechanism $\mathcal{M}$ and shares it with the aggregator. Using $\bZ=\{\bz_{1},..,\bz_{N}\}$ and the knowledge of $\mathcal{M}$, the aggregator intends to recover the posterior distribution $\Pr[\btheta|\bZ]$.

We emphasize that from the DP perspective the posterior inference is a downstream task and
can performed with any off-the-shelf sampling algorithm. We pay a privacy cost only once for
perturbing the inputs, and obtain the posterior distribution without further privacy cost as
a consequence of the post-processing property of (L)DP.

In the next two sections, we propose two simple ways of modeling single-round LDP-compliant data collection protocols. 

\section{Probabilistic model for noise-aware inference}
\subsection{Modeling with sufficient statistics}
\label{Sec:sum_of_sums}
For certain statistical models, we can characterize the distribution of observations by a
finite number of sufficient statistics. For such models we can deploy the LDP mechanism on
the individual sufficient statistics. This means that there exists a mapping 
$\bt : \{\bx_i\}_{i=1}^N \rightarrow \bs$ such that
\begin{align*}
    \Pr[\btheta | \bx_1,\dots,\bx_N] = \Pr[\btheta | \bs],
\end{align*}
i.e.\ $\bs$ contains all the information of $\{\bx_i\}_{i=1}^N$ relevant for the probabilistic inference task.
Denoting the latent unperturbed sufficient statistics with $\bS$ and the perturbed sum with $\bZ$, our
probabilistic model becomes
\begin{align*}
    \Pr[\btheta, \bS, \bZ] = \Pr[\btheta] \Pr[\bZ \mid \bS]  \Pr[\bS\mid \btheta],
\end{align*}
where $\btheta$ are the model parameters.

\subsubsection{Linear regression} 
For linear regression, the sufficient statistics $\bs \in \mathbb{R}^{d+2 \choose d}$ for each input
$\{\bx,y\}$ are 
\begin{equation*}
    \bs=\bt(\bx,y)= [\text{vec}(\bx \bx^{T}), \bx y,y^{2}].
\end{equation*}
We assume that each individual perturbs these sufficient statistics locally using the Gaussian mechanism. Next, we split the 
sum of perturbed sufficient statistics $\bZ$ as the sum of true sufficient statistics $\bS$ and the sum of local perturbations
($\sum_{i} \bzeta_i$). Conveniently for Gaussian noise, the sum of perturbations is also Gaussian with the variance 
scaled up by $N$.  Note, that for other noise mechanisms that are not closed under summation (e.g. Laplace mechanism), 
we could still approximate the sum of noise as a Gaussian using the central limit theorem.

Using a normal approximation to model the latent sum of sufficient statistics $\bS$ and Gaussian perturbation as the noise
mechanism, we can marginalize out the latent $\bS$ and obtain the following posterior for the model parameters 
(full derivation in Section~\ref{sec:posterior_calculations_GLM} in the supplementary article):
\begin{align}
    \Pr[\btheta, \bSigma| \bZ] \propto  \Pr[\btheta] \Pr[\bSigma]
        \mathcal{N}(\bZ;\bmu_{s},\bSigma_{s}+N\bSigma_{*}),
    \label{eq:priv_model_2} 
\end{align}
where $\bSigma_{*} \in \mathbb{R}^{{d+2 \choose d} \times {d+2 \choose d}}$ is the diagonal covariance  matrix
of Gaussian noise, and  $\bmu_{s} \in \mathbb{R}^{d+2 \choose d},\bSigma_{s} \in \mathbb{R}^{{d+2 \choose d}
\times {d+2 \choose d}}$ are the mean and covariance of $\bS$. This model is similar to \cite{BS:19}'s model that considered this problem in the centralized setting, and we use the closed form expressions for
$\bmu_{s}$ and $\bSigma_{s}$ from their work. 

\para{Our contribution}. 
\cite{BS:19}'s solution for linear regression satisfies $\epsilon$-DP, but with a sensitivity expression depending
quadratically on $d$ because they assume bounds on the individual components of $\bx$. Moreover, they analyze the sensitivities for the three components of $\bs$ separately. Adding noise with a scale proportional to $d$ is likely to result in so high level of noise under LDP that any signal in the data becomes indistinguishable. Our main contribution for this model is in the form a new bound for $\Delta_2(\bt)$ for linear regression with no dependence on $d$. We show in Lemma~\ref{lemma:sensitivity_linear_regression} that analyzing all three components of $\bs$ together instead of treating them separately as in earlier works, leads to a tighter bound on $\Delta_2(\bt)$. Towards this goal, we define the following convenience functions:
\begin{align*}
             \bt_1(\bx) &= \bx, \\
        \bt_2(\bx) &= \begin{bmatrix} x_1^2,  \ldots , x_d^2, 
          x_1x_2, \ldots ,  x_{d-1} x_d \end{bmatrix}^T. 
\end{align*}

Using above functions,  $\bt(\bx,y)=[\bt_2(\bx), y\bt_1(\bx),y^2]$. 

\begin{lemma}
\label{lemma:sensitivity_linear_regression}
Assume $|| \bx ||_2 \leq R$ , $|y| \leq R_y$, and let $\bt_1$ and $\bt_2$ be defined as above and let  $\sigma_1,\sigma_2,\sigma_3>0$.
Consider the Gaussian mechanism 
$$
\mathcal{M}(\bx) = \begin{bmatrix} \bt_2(\bx) \\ y \bt_1(\bx) \\ y^2  \end{bmatrix} + 
\mathcal{N}\left(0, \begin{bmatrix} \sigma_1^2 \bI_{d_2} & 0 & 0 \\ 0 & \sigma_2^2 \bI_{d} & 0 \\ 0  & 0 & \sigma_3^2 \end{bmatrix}   \right),
$$
where $d_2 = {d + 2 \choose 2}$. 
The tight $(\epsilon,\delta)$-DP for $\mathcal{M}$ 
is obtained by considering a Gaussian mechanism with noise variance $\sigma_1^2$ and sensitivity
$$
\Delta_2(\bt)  \leq \sqrt{\frac{\sigma_1^{4} R_{y}^4 }{2\sigma_2^{4}} + 2R^4+ \frac{2\sigma_1^2 R_{y}^2 R^2}{\sigma_2^2}  + \frac{\sigma_1^2 R_y^{4}}{\sigma_3^2}}.  
$$
\end{lemma}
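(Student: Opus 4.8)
The plan is to reduce the anisotropic mechanism $\mathcal{M}$ to an equivalent \emph{isotropic} Gaussian mechanism and then bound the sensitivity of the rescaled statistic, treating all three blocks \emph{jointly}. Since the noise covariance $\boldsymbol{\Sigma}_* = \operatorname{diag}(\sigma_1^2\bI_{d_2}, \sigma_2^2\bI_d, \sigma_3^2)$ is diagonal and positive definite, left-multiplying the output of $\mathcal{M}$ by the fixed, data-independent matrix $\boldsymbol{P} = \operatorname{diag}(\bI, \tfrac{\sigma_1}{\sigma_2}\bI, \tfrac{\sigma_1}{\sigma_3})$ is a bijection, so by post-processing in both directions it preserves the privacy level exactly. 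Under $\boldsymbol{P}$ the noise becomes $\mathcal{N}(\bzero,\sigma_1^2\bI)$, hence $\mathcal{M}$ has the same $(\epsilon,\delta)$ guarantee as the standard Gaussian mechanism of Equation~\eqref{eq:agm} with variance $\sigma_1^2$ and squared sensitivity
\begin{equation*}
\Delta_2^2 = \max_{\|\bx\|,\|\bx'\|\le R,\ |y|,|y'|\le R_y} \Big[\, \|\bt_2(\bx) - \bt_2(\bx')\|^2 + \tfrac{\sigma_1^2}{\sigma_2^2}\|y\bt_1(\bx) - y'\bt_1(\bx')\|^2 + \tfrac{\sigma_1^2}{\sigma_3^2}(y^2 - y'^2)^2 \,\Big].
\end{equation*}
This is why the calibration is tight: the equivalence is exact and the analytic Gaussian mechanism is tight for isotropic noise, so the only remaining task is to upper-bound the maximum above.

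The key device is to recognize the first two summands as (part of) the squared Frobenius distance between two rank-one matrices. Introduce the augmented vector $\tilde{\boldsymbol v} = [\bx^\top,\, \gamma y]^\top$ with $\gamma^2 = \tfrac{\sigma_1^2}{2\sigma_2^2}$, so that $\tilde{\boldsymbol v}\tilde{\boldsymbol v}^\top$ has diagonal block $\bx\bx^\top$, off-diagonal block $\gamma y\bx$, and corner $\gamma^2 y^2$. Expanding the Frobenius distance block by block (the off-diagonal block is counted twice, which is exactly why $2\gamma^2 = \sigma_1^2/\sigma_2^2$) gives
\begin{equation*}
\|\tilde{\boldsymbol v}\tilde{\boldsymbol v}^\top - \tilde{\boldsymbol v}'\tilde{\boldsymbol v}'^\top\|_F^2 = \|\bx\bx^\top - \bx'\bx'^\top\|_F^2 + \tfrac{\sigma_1^2}{\sigma_2^2}\|y\bx - y'\bx'\|^2 + \gamma^4 (y^2 - y'^2)^2.
\end{equation*}
Two elementary observations then dominate the first two summands of $\Delta_2^2$: first, $\|\bt_2(\bx) - \bt_2(\bx')\|^2 \le \|\bx\bx^\top - \bx'\bx'^\top\|_F^2$, because $\bt_2$ lists each off-diagonal product $x_ix_j$ once whereas the full matrix counts it twice; and second, the trailing $\gamma^4(y^2 - y'^2)^2 \ge 0$ may be dropped. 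Together these yield $\|\bt_2(\bx) - \bt_2(\bx')\|^2 + \tfrac{\sigma_1^2}{\sigma_2^2}\|y\bx - y'\bx'\|^2 \le \|\tilde{\boldsymbol v}\tilde{\boldsymbol v}^\top - \tilde{\boldsymbol v}'\tilde{\boldsymbol v}'^\top\|_F^2$.

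To finish, I would write $\|\tilde{\boldsymbol v}\tilde{\boldsymbol v}^\top - \tilde{\boldsymbol v}'\tilde{\boldsymbol v}'^\top\|_F^2 = \|\tilde{\boldsymbol v}\|^4 + \|\tilde{\boldsymbol v}'\|^4 - 2(\tilde{\boldsymbol v}^\top\tilde{\boldsymbol v}')^2 \le \|\tilde{\boldsymbol v}\|^4 + \|\tilde{\boldsymbol v}'\|^4$, discarding the nonnegative cross term; since $\|\tilde{\boldsymbol v}\|^2 = \|\bx\|^2 + \gamma^2 y^2 \le R^2 + \gamma^2 R_y^2$, the combined block is at most $2\big(R^2 + \tfrac{\sigma_1^2 R_y^2}{2\sigma_2^2}\big)^2$, which expands to $2R^4 + \tfrac{2\sigma_1^2 R_y^2 R^2}{\sigma_2^2} + \tfrac{\sigma_1^4 R_y^4}{2\sigma_2^4}$. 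The third summand is bounded by its own worst case $\max(y^2 - y'^2)^2 = R_y^4$, contributing $\tfrac{\sigma_1^2 R_y^4}{\sigma_3^2}$, and adding the two pieces gives the claimed expression; the privacy statement then follows by substituting $\sigma_1$ and this $\Delta_2$ into the analytic Gaussian mechanism. The main obstacle—and the crux of the improvement over prior work—is this joint matrix embedding: the precise choice $\gamma^2 = \sigma_1^2/(2\sigma_2^2)$ absorbs the cross-block noise ratio into a single rank-one outer product so that the negative Cauchy–Schwarz term can be exploited, whereas bounding the three blocks separately (as earlier works do) would instead produce the looser constants $4R^4$ and $4\sigma_1^2 R_y^2 R^2/\sigma_2^2$ and retain an explicit dependence on $d$.
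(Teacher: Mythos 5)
Your proof is correct: the invertible-rescaling argument, the bound $\|\bt_2(\bx)-\bt_2(\bx')\|_2^2 \le \|\bx\bx^{T}-\bx'\bx'^{T}\|_F^2$, the rank-one identity $\|\tilde{\boldsymbol v}\tilde{\boldsymbol v}^{T}-\tilde{\boldsymbol v}'\tilde{\boldsymbol v}'^{T}\|_F^2=\|\tilde{\boldsymbol v}\|^4+\|\tilde{\boldsymbol v}'\|^4-2(\tilde{\boldsymbol v}^{T}\tilde{\boldsymbol v}')^2$, and the final expansion all check out and reproduce the lemma's constant exactly. Your route genuinely differs from the paper's in the key step, even though it lands on the same algebra. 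The paper, after the same reduction to isotropic noise with variance $\sigma_1^2$, expands each block in coordinates, obtaining $\|\bt_2(\bx)-\bt_2(\bx')\|_2^2 = \|\bx\|_2^4+\|\bx'\|_2^4-2\langle\bx,\bx'\rangle^2$ and $\|y\bt_1(\bx)-y'\bt_1(\bx')\|_2^2 \le 2R_y^2R^2-2yy'\langle\bx,\bx'\rangle$, and then maximizes the resulting scalar quadratic in $t=\langle\bx,\bx'\rangle$, whose vertex lies at $t^{*}=-\sigma_1^2yy'/(2\sigma_2^2)$. Your augmented embedding $\tilde{\boldsymbol v}=[\bx^{T},\gamma y]^{T}$ with $\gamma^2=\sigma_1^2/(2\sigma_2^2)$ is precisely this completion of the square in disguise: the term you discard, $-2(\tilde{\boldsymbol v}^{T}\tilde{\boldsymbol v}')^2=-2\bigl(t+\gamma^2yy'\bigr)^2$, vanishes exactly at the paper's maximizer $t^{*}$, which is why dropping it is lossless and the two bounds coincide term by term. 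What your version buys is conceptual economy: a single application of the rank-one Frobenius identity replaces the coordinate bookkeeping and the explicit optimization, and it makes visible why the joint treatment of the blocks beats blockwise bounds; the paper's version is more elementary and needs no foresight in choosing $\gamma$. One peripheral remark of yours is slightly off: the dependence on $d$ in \cite{BS:19} that this lemma removes comes from their componentwise bounds on $\bx$ (rather than the norm bound $\|\bx\|_2\le R$), not merely from analyzing the three components separately; under the norm bound, a crude separate analysis would only cost constant factors (e.g.\ $4R^4$ in place of $2R^4$), as you say. This is commentary, though, and does not affect the validity of your proof.
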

\begin{proof}
Proof can be found in Section~\ref{proof:sensitivity_linear_regression} in the Supplement.
\end{proof}
  \begin{corollary}
\label{cor:linear_regression}
In the special case $R=1$ and $\sigma_1 = \sigma_2 =\sigma_3= \sigma$, by Lemma~\ref{lemma:sensitivity_linear_regression}, the optimal $(\epsilon,\delta)$ is obtained by considering the Gaussian mechanism
with noise variance $\sigma^2$ and sensitivity $\Delta_2(\bt) \leq \sqrt{\frac{ 3R_{y}^4}{2}+2R_{y}^{2} + 2}$.
\end{corollary}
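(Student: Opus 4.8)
The plan is to obtain the statement as a direct specialization of Lemma~\ref{lemma:sensitivity_linear_regression}, so that the only work is to substitute the special constants into the bound already established there and simplify. First I would impose $R = 1$ and $\sigma_1 = \sigma_2 = \sigma_3 = \sigma$ in the hypotheses of the lemma. With these choices every variance ratio appearing in the lemma collapses, $\sigma_1^4/\sigma_2^4 = 1$, $\sigma_1^2/\sigma_2^2 = 1$, and $\sigma_1^2/\sigma_3^2 = 1$, while the geometric factors become $R^4 = R^2 = 1$. In particular, the reduction carried out inside the lemma, namely rescaling the anisotropic Gaussian mechanism to an isotropic one of variance $\sigma_1^2$, now yields an isotropic mechanism of variance exactly $\sigma^2$, which is precisely the noise variance named in the corollary. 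Hence the $(\epsilon,\delta)$ optimality claim is inherited verbatim and nothing new has to be argued on the privacy side.

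Next I would substitute term by term into $\Delta_2(\bt)^2 \leq \frac{\sigma_1^{4} R_{y}^4}{2\sigma_2^{4}} + 2R^4+ \frac{2\sigma_1^2 R_{y}^2 R^2}{\sigma_2^2} + \frac{\sigma_1^2 R_y^{4}}{\sigma_3^2}$. The pure second-moment term $2R^4$ becomes $2$, the mixed term $\frac{2\sigma_1^2 R_y^2 R^2}{\sigma_2^2}$ becomes $2R_y^2$, and the two contributions carrying $R_y^4$, namely $\frac{\sigma_1^4 R_y^4}{2\sigma_2^4} \to \frac{R_y^4}{2}$ and $\frac{\sigma_1^2 R_y^4}{\sigma_3^2} \to R_y^4$, add up to $\frac{3 R_y^4}{2}$. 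Collecting the four pieces gives $\Delta_2(\bt)^2 \leq \frac{3R_y^4}{2} + 2R_y^2 + 2$, and taking the square root yields the claimed bound.

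The one point I would double-check is exactly this combination of the two $R_y^4$ terms, since that is where the factor $3/2$ is manufactured: one copy comes from the $y^2$ block (through the ratio $\sigma_1^2/\sigma_3^2$), and the other, with its characteristic $1/2$, originates from the coupling between the $\bx\bx^T$ block and the $y\bt_1(\bx)$ block that is optimized over the inner product $\bx^{T}\bx'$ inside the proof of the lemma. There is otherwise no real obstacle: the corollary is pure bookkeeping once Lemma~\ref{lemma:sensitivity_linear_regression} is available. If a self-contained verification were wanted rather than a citation, I would simply re-run that joint maximization with $\sigma_1/\sigma_2 = 1$, $\sigma_1/\sigma_3 = 1$ and $R = 1$, where the same completing-the-square (equivalently AM--GM) step reproduces the $\frac{R_y^4}{2}$ contribution and confirms the constants.
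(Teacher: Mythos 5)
Your proposal is correct and matches the paper's (implicit) argument: the corollary is obtained exactly by substituting $R=1$ and $\sigma_1=\sigma_2=\sigma_3=\sigma$ into the bound of Lemma~\ref{lemma:sensitivity_linear_regression}, whereupon the four terms reduce to $\frac{R_y^4}{2}+2+2R_y^2+R_y^4=\frac{3R_y^4}{2}+2R_y^2+2$. Your attribution of the two $R_y^4$ contributions (one from the $y^2$ block, the $\frac{1}{2}$-weighted one from the maximization over $\langle\bx,\bx'\rangle$ in the lemma's proof) is also accurate.
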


\subsubsection{Logistic regression}
\label{sec:logistic_regression_ss}
Analogously, we can derive an approximate sufficient statistics based logistic regression model. We use the privacy results and  the calculations for $\bmu_{\bs}, \bSigma_{\bs}$ from \cite{KJKKH:21}. The privacy results are summarized in Section~\ref{sec:sufficient_statistics_lr} in the Supplement.

\subsection{Modeling with perturbed inputs}
\label{sec:marg_input}
To generalize our framework beyond models with sufficient statistics, we consider the probabilistic model
\begin{align}
    \Pr[\bz,\bx \mid \btheta] = \Pr[\bz \mid \bx] \Pr[\bx \mid \btheta],
\end{align}
where $\bz$ denotes the perturbed observation of the latent input $\bx$. This formulation allows us to work with arbitrary
input distributions $\Pr[\bx \mid \btheta]$ and $\Pr[\bz \mid \bx]$. However, introducing $N$ latent variables $\bx_i$, one
for each input, quickly makes the inference computationally infeasible as $N$ increases. To overcome this,  we marginalize out
the latent $\bx$:
\begin{equation}
	\begin{aligned}
    \Pr[\bz\mid \btheta] &=\int_{\Omega(\bx)} \Pr[\bz,\bx \mid \btheta] \dd\bx. 
    \end{aligned}
\end{equation}

As another benefit of working with the perturbed inputs, we do not need to presuppose any downstream usage of the data,
contrary to the sufficient statistic based models. This means that the privacy cost is paid once in the data collection, and
the aggregator can then use the data for arbitrary inference tasks. 

In the remainder of this Section, we exemplify how to marginalize the likelihood for different inference problems. We will highlight different
types of subproblems within each example (e.g.~modeling  clipping and enforcing the parameter constraints in the model), and demonstrate how
to solve them. Note that these solutions extend to a broad class of inference tasks, much beyond the scope of these examples. All derivations are 
in the Supplement. 

\subsubsection{Unidimensional parameter estimations}
\label{Sec:unidimensional_parameter_estimation}
We first demonstrate the marginalization in a typical statistical inference task, where we seek to find the parameters 
of a 1-dimensional generative process (e.g. parameters of Gaussian, Poisson, geometric, and exponential distribution).

Many of such distributions have unbounded support. The work from \cite{BS:18} suggests to specify a bound $[a,b]$ on the data
domain and discard all points outside it. This not only hurts the accuracy but also burdens the LDP protocol to spend budget
to mask non-participation. Instead, we have each user map their input falling outside this bound to the bound before
privatizing. For example, for Gaussian distribution, if $x \not\in [a,b]$, the user clips their input to $a$ if $x \leq a$
or to $b$ if $x \geq b$. We adhere to our probabilistic treatment of data, and model the clipped perturbed observations
using a \emph{rectified} probability density function 
\begin{equation}
    \begin{aligned}
    \Pr[x \mid \theta; a,b] = \delta_a (&x)\Pr[x \leq a] \\
                                        &+ \mathbf{I}_{[a,b]}(x)\Pr[x \mid \theta] + \delta_b(x) \Pr[x \geq b],
\end{aligned}
\end{equation}
where $\Pr[x \mid \theta]$ denotes the pdf of $x$ prior to clipping and $\delta$ the Dirac's delta function.
As a consequence of clipping, we observe peaks at $a$ and $b$ in the rectified density function.

\par In the Supplementary material (see Sections \ref{sec:Gaussian_distribution} and \ref{sec:Exponential_distribution}), we show
the marginalization for parameter estimation task of Gaussian and exponential observation models. In both of the cases we 
have used Laplace noise to satisfy pure $\epsilon$-DP.

\para{Sufficient condition for marginalization.} 
Consider that we observe $z$ by perturbing $x$ using Gaussian perturbation and that $x$ is assumed to be bounded within
the interval $(a,b)$. In general, we would like to evaluate the following integral to marginalize out the latent inputs $x$:
\begin{align}
    \label{eq:marg}
    \int_a^b \Pr[z,x] \dd x.
\end{align}
However, this integral is often intractable. The next result will present a general class of models for $x$ that allow the
marginalization in tractable form.

\begin{lemma}
    Assume the probabilistic model
    \begin{equation} \label{eq:t_12}
    \begin{aligned}
        x \sim p, \quad z \mid x &\sim \mathcal{N}(x, \sigma^2).
    \end{aligned}
    \end{equation}
    If $p$ is of the form $\Pr[x] = Cg(x)\exp(h(x))$, where $C$ is a normalization constant, $g$ and $h$ are polynomials and
    $h$ is at most a second order polynomial, then the integral marginalizing $x$ out of $\Pr[z,x]$ becomes tractable.
    \label{lemma:suff_condition_marginalization}
\end{lemma}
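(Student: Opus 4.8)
The goal is to show that when $x \sim p$ with $\Pr[x] = Cg(x)\exp(h(x))$ ($g,h$ polynomials, $\deg h \le 2$) and $z \mid x \sim \mathcal{N}(x,\sigma^2)$, the marginal integral $\int_a^b \Pr[z,x]\,\dd x$ is tractable.

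The plan is to write out the joint density explicitly and absorb everything Gaussian-like into a single completed square. We have
$$\Pr[z,x] = \Pr[z\mid x]\Pr[x] = \frac{C\,g(x)}{\sqrt{2\pi}\,\sigma}\exp\left(-\frac{(z-x)^2}{2\sigma^2}\right)\exp(h(x)).$$
Since $h$ is at most quadratic, write $h(x) = \alpha x^2 + \beta x + \gamma$. Then the two exponential factors combine into $\exp\big(-\tfrac{(z-x)^2}{2\sigma^2} + \alpha x^2 + \beta x + \gamma\big)$, whose exponent is a quadratic in $x$. The key algebraic step is to complete the square in $x$: the exponent becomes $-\tfrac{1}{2\tilde\sigma^2}(x-\tilde\mu)^2 + \text{const}$, for some effective variance $\tilde\sigma^2$ and mean $\tilde\mu$ depending on $z,\sigma,\alpha,\beta$. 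This identifies the integrand, up to the polynomial factor $g(x)$ and a constant not depending on $x$, as an unnormalized Gaussian density in $x$.

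It then remains to integrate a polynomial against a (truncated) Gaussian kernel over the finite interval $(a,b)$. I would reduce this to a sum of moments: $\int_a^b x^k \exp\!\big(-\tfrac{(x-\tilde\mu)^2}{2\tilde\sigma^2}\big)\,\dd x$ for $k = 0,1,\dots,\deg g$. These truncated Gaussian moments are tractable — they are computable in closed form via integration by parts, yielding a recursion that expresses the $k$-th moment in terms of lower-order moments together with the Gaussian CDF evaluated at the endpoints $a,b$ and boundary terms of the form $x^{k-1}\exp(-(x-\tilde\mu)^2/2\tilde\sigma^2)$. The base cases ($k=0$ giving the difference of error functions, $k=1$ giving an explicit exponential difference plus a CDF term) close the recursion. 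Thus every term in the expansion of $g(x)$ integrates to a finite closed-form expression, and the whole marginal is a finite linear combination of such terms.

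The one case that needs a side check is the sign of the effective variance: completing the square is only a genuine Gaussian kernel when the coefficient $\tfrac{1}{2\sigma^2} - \alpha$ of $x^2$ is positive, i.e.\ when $\tilde\sigma^2 = (\tfrac{1}{\sigma^2} - 2\alpha)^{-1} > 0$. On an unbounded domain this would be the crux of the argument, but here the integral is over the bounded interval $(a,b)$, so even if $\alpha$ makes the quadratic coefficient non-positive the integrand is a continuous function on a compact interval and the integral is automatically finite; the recursion for truncated integrals of $x^k \exp(\text{quadratic})$ still terminates in closed form (now in terms of the imaginary-error function or a scaled CDF). I do not expect any genuine obstacle: the entire argument is the standard Gaussian-conjugacy completion of the square followed by the finite-moment recursion, and boundedness of $(a,b)$ removes the only potential convergence concern. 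The main thing to be careful about is bookkeeping the constants and stating the moment recursion cleanly rather than any deep difficulty.
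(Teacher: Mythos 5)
Your proof follows essentially the same route as the paper's: combine the Gaussian likelihood with $\exp(h(x))$, complete the square in the quadratic exponent, and reduce the marginal to a finite sum of truncated-Gaussian moments of orders up to $\deg g$, each expressible via Gaussian CDFs at the endpoints. The only differences are minor---the paper closes the moment step by citing known closed forms for non-central moments of the truncated normal, where you derive the integration-by-parts recursion explicitly, and you additionally flag the degenerate case where the effective quadratic coefficient $1/\sigma^2 - 2\alpha$ is non-positive (handled by boundedness of $(a,b)$), a case the paper's proof implicitly assumes away.
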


\begin{proof}
Proof can be found in  Section~\ref{proof:suff_condition_marginalization} in the Supplement.
\end{proof}

\subsubsection{Histogram aggregation}
\label{sec:histogram_model}
We now focus on the problem of histogram aggregation, which has enjoyed significant attention under LDP. We assume each user $i$ holds an item $x_i = k \in [d]$. Aggregator's aim is to estimate the normalized frequency $f_{k}$ of each item $k \in [d]$, $f_{k} = \frac{\sum_{i \in [N]} \mathbf{1}_{ x_i = k} }{N}.$ Among several approaches proposed, we pick  \cite{WBLJ:17}'s 1-bit randomized response method \emph{Optimal Unary Encoding} to specify the likelihood. This method satisfies $\epsilon$-DP while also achieving theoretically optimal reconstruction error.   

\para{Optimal Unary Encoding (OUE).} \cite{WBLJ:17} represent each input $x=k$ as a one-hot encoded vector $\be \in \{1,0\}^{d}$ with $e_{k}=1$ and $e_l =0, \forall l\neq k$. All users then apply 1-bit RR using probabilities $p$ and $q$ at each location $j\in [d]$ of $\be$ and obtain $z_j$ as follows:
\begin{align}
z_{j} =
\begin{cases}
1,  \text{ with prob. } q=\frac{1}{2}  \text{, if } e_{j}=1, \\
0,  \text{ with prob. } p=\frac{\exp(\epsilon)}  {1+\exp(\epsilon)}  \text{, if } e_{j}=0.
\end{cases}
\end{align}
Upon collecting all perturbed bit vectors $\bz_i \in \{0, 1\}^{d}, i \in [N]$, the aggregator computes the variance-minimizing point estimate $\hat{f}_k$ for item $k$ as $\hat{f}_{k}=\frac{\frac{1}{N}\sum_{i \in [N]}z_{i,k} -q}{p-q}.$ 

\par In a strong privacy setting, or when the true counts are low, $\hat{f}_{k}$'s can be negative or may not sum to 1.
\cite{WLLSL:20,CMM:21} present empirical comparisons of several of post-processing techniques that impose simplexity. 
The survey shows that the performance of these methods varies a lot across tasks, and none of the methods provides a general and 
principled way of mapping unconstrained outputs into the simplex.
%Many of these techniques may appear ad hoc, disconnected from the noise mechanism used, and their performance may be data set dependent. 

\par{\textbf{Our approach.}} We model the data generation process as a multinomial distribution parametrized by 
$\btheta \in \mathbb{S}^{K-1}$ with $\Pr[x = k | \btheta]= \theta_k, \forall k \in [K]$. Constraining $\btheta$'s simplexity in a model itself that describes data generation may improve accuracy, and eliminates the need of any further post-processing. The calculations for likelihood $\Pr[\bz,x=k|\btheta]$ can be found in Section~\ref{sec:OUE_likelihood} in the Supplement. 

%\subsubsection{Linear regression}
\subsubsection{Linear models}
\label{sec:linear_models}
Next consider a linear model where an outcome $y$ depends on variables $\bx$ as follows:
\begin{align}
    &\bx \mid \bSigma \sim \mathcal{N}(\mathbf{0}, \bSigma) \\
    &\mathbb{E}[y \mid \bx, \btheta] \sim g(\bx^{T}\btheta).
\end{align}
In Section \ref{sec:linear_regression_input_likelihood} of the Supplement, we show how to marginalize the inputs $(\bx, y)$
when $g(x) = x$ (linear regression) and both $X$ and $y$ are perturbed using Gaussian mechanism and in
\ref{sec:logistic_regression_input_likelihood} for $g(x)=1/(1+\exp(-x))$ (logistic regression) where $X$ is
perturbed with Gaussian mechanism and $y$ using the 1-bit RR.

\section{Experiments}

\subsection{Public data sets}
\label{Sec:data sets}
For histogram-related experiments on public data, we use Kosarak~\citep{kosarak}  data set which consists of 990K click
streams over 41K different pages. Our data set consists of $~11K$ records of randomly sampled $10$ items from the top-10K most frequent items.
\par For logistic regression, we use the Adult~\citep{Blake:98} data set (with over $48K$ records) from UCI repository to predict whether a person's income exceeds 50K. 
\par Finally, we train a linear regression model using Wine~\citep{CORTEZ2009547} data that comprises of roughly 6.5K 
samples with 12 features. We set $R_y=2$. A $80/20$\% train/test split was used.

\subsection{Private baseline --- LDP-SGD}\label{Sec:LDP-SGD}
We compare our regression models with a local version of DP-SGD adapted from \cite{WXYZHS:19} to our case study. They make the privacy cost invariant to the number of training iterations by having each user participate only once in the training. They partition the users into the groups of size $G=\Omega(\frac{d \log(d)}{\epsilon^{2}})$.  However, unlike in standard DP-SGD, the aggregator makes model updates for group $i$ only after receiving the noisy gradients from group $i-1$. The gradients are perturbed using Gaussian noise (with $\Delta_{2}=2$) after clipping their $L_2$ norm to 1. In total, $2N-G$ messages are exchanged in the protocol, which is roughly $2N$ for large enough $\epsilon$'s. 

\para{Pre-processing for regression tasks.} To reduce the training time to be more manageable, we use principal component analysis (PCA) to reduce the original dimensionality of our data sets. Additionally, we center our data sets to zero mean.

\subsection{Default settings and implementation}

Both noise-aware and non-private baseline models are implemented either in Stan~\citep{Stan} or NumPyro~\citep{PPJ:19,BCJO:18}. 
We use \emph{No-U-Turn}~\citep{nuts} sampler, which is a variant of Hamiltonian Monte Carlo. 
We run 4 Markov chains in parallel and discard the first 50\% as warm-up samples. We ensure that the Gelman-Rubin convergence statistic~\citep{R_hat:1998} for all models remains consistently below 1.1 for all experiments. For regression experiments, we fix $R=1$ to use Corollary~\ref{cor:linear_regression}  and ~\ref{cor:logistic_regression}.

\para{Prior selection.} The priors for model parameters are specified in Table~\ref{tab:priors}, and in Section~\ref{sec:LKJ} in the Supplement. The arguments to these priors are the hyper-parameters for the model.

\subsection{Uncertainty calibration} 
For the deployment of the learned models, it is crucial that the uncertainty estimates of the models are calibrated.
Overestimation of variance would lead to uninformative posteriors whereas underestimation can lead to overconfident estimates.
To assess this, we use synthetic data sampled from prior predictive distribution and learn the posterior distributions for the
model parameters. We repeat this multiple times, and count the fraction of runs that included the data generating parameters for
several central quantiles of the posterior samples.
Figure~\ref{fig:gaussian} shows the posterior calibration  for the Gaussian parameter estimation model as a function of $\epsilon$.
We can see that our posteriors are well calibrated even for the most strict privacy constraints. The plots for the histogram and exponential distribution models can be found in Figures~\ref{fig:OUE} and \ref{fig:exponential} in the supplement. 

\subsection{Accuracy of private posteriors}
We can also test the accuracy of our model by directly comparing the private and non-private posteriors.
Figure~\ref{fig:OUE_real_data} compares the mean absolute deviation over 50 repeats in the empirical cumulative density
function (ECDF) of the private posteriors of the multinomial probabilities from the non-private for the Kosarak data set. We verify that our private
posteriors, despite adhering to LDP guarantees, are quite close to the non-private posteriors. 

\par Additionally, to motivate the benefit of including the whole data generation process in a single model, we compare the posterior means of our histogram aggregation model to the private point estimates. As mentioned in Section~\ref{sec:histogram_model}, the denoised point estimates may not satisfy simplexity, especially for strong privacy parameters and/or low sample size. We can enforce simplexity in these estimates with additional post-processing e.g. by computing their least squares approximations~\citep{HRMS:10}. While the least squares solution is well principled, it is disconnected from the main LDP protocol. By modeling $\btheta$ as a multinomial distribution, we automatically satisfy $\btheta$'s simplexity in the model itself. This leads to better utility in some cases. 
Figure~\ref{fig:point_estimates_vs_posteriors_0.5} compares the mean squared error in reconstruction for a three dimensional histogram with true weights $[0.7,0.2,0.1]$ for both methods for $\epsilon=0.5$. We observe that the error in our model is lower by 10-30\%, specially in low sample size regimes. The comparison for more $\epsilon$ values can be found in Figure~ \ref{fig:point_estimates_vs_posteriors} in Supplement.

\subsection{Accuracy of inference}
Besides being able to correctly estimate the uncertainty of model parameters, we want our inference to also be accurate. Towards this goal, we compare the performance metrics of our regression models to LDP-SGD (Section~\ref{Sec:LDP-SGD}).
  
\para{Linear regression.} We trained the linear regression model using the Wine~\citep{CORTEZ2009547} data set. Figure
\ref{fig:linear_regression} shows that the sufficient statistics based Bayesian model performs well under strict privacy
guarantees. The comparison method (LDP-SGD) starts to perform better when we increase the privacy budget. As the sufficient statistic based model is trained on clipped inputs, it may underestimate the test targets which are unclipped, thus hurting the RMSE. 

\para{Logistic regression.} We run the sufficient statistic based model for all 48K records of the adult data set. However, for the input based model (Section~\ref{sec:linear_models}), we use randomly sampled $10K/48K$ records to reduce the training time. We choose $c=0.5$ because this split yielded the best utility in our internal experiments (not included). For a fair comparison, we also run LDP-SGD with the same 10K records.

\par Figure~\ref{fig:logistic_regression_ss_input} compares the mean AUC for LDP-SGD, and both models for various privacy parameters. We verify that for $\epsilon \leq 0.8$, the sufficient statistics based Bayesian model outperforms DP-SGD. For large enough $\epsilon$'s,  sufficient statistics based Bayesian model achieves nearly the same level of performance as DP-SGD without additional rounds of messages from aggregator. The input based model cannot, however, perform at a similar level. This is possibly caused by the large amount of noise (due to the budget split), that suppress the covariance information necessary for accurate inference. In contrast, the covariance structure is perturbed relatively cheaply in the sufficient statistics based solution due to tight privacy accounting. We exclude the input based solution from our next plot.

\begin{figure}
    \centering
    \includegraphics[width=1.\linewidth]{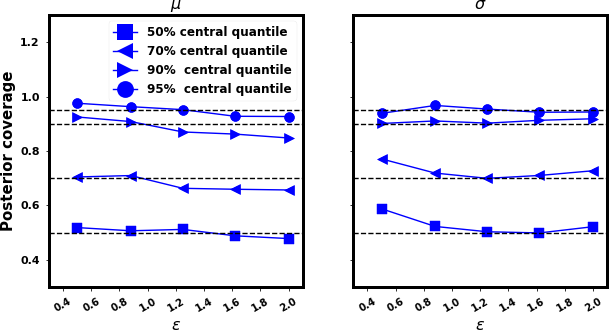} 
    \caption{Gaussian parameter estimation: Even under strict privacy guarantees, the posteriors are well calibrated. Figure shows posterior coverage calculated from $5000$ Gaussian samples (clipped to $[-5,5]$) over 500 repeats.    
    }
    \label{fig:gaussian}
\end{figure}

\begin{figure}
    \centering
    \includegraphics[width=1.\linewidth]{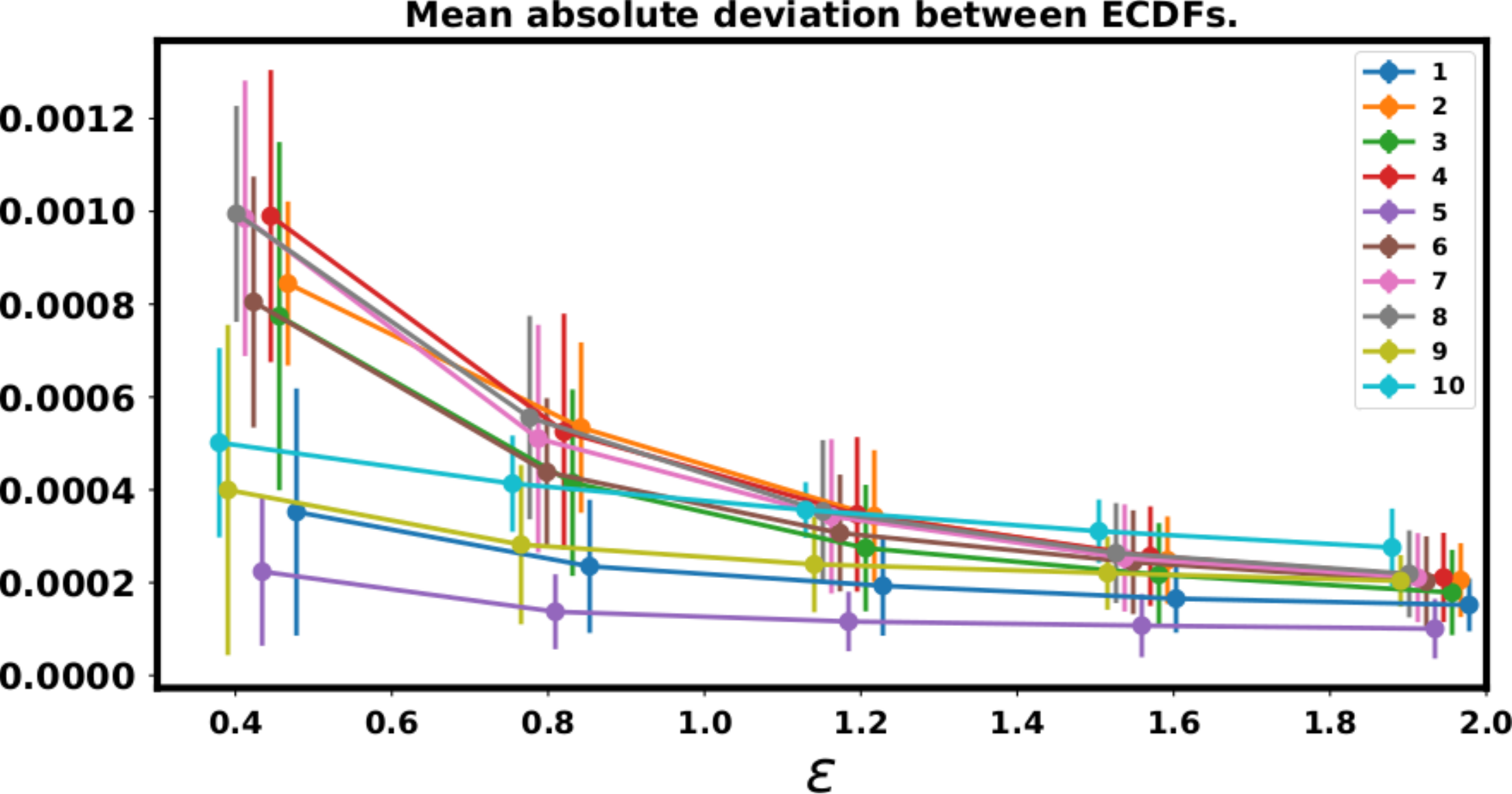}
    \caption{Histogram discovery: The LDP variant performs close to non-private even in the strict privacy regime and converges to the non-private posteriors. The figure shows the mean absolute difference over 50 repeats between the empirical CDFs of private and non-private posteriors for each of the 10 dimensions of an abridged version of the Kosarak~\citep{kosarak} data set.}
    \label{fig:OUE_real_data}
\end{figure}

\begin{figure}
    \centering
    \includegraphics[width=1.0\linewidth]{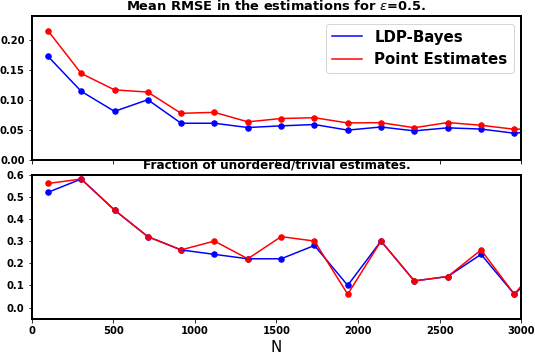}   
    \caption{Histogram estimation: The full-Bayesian version of the histogram aggregation method OUE \citep{WBLJ:17} outperforms the corresponding point estimates for small sample sizes. \textbf{Top:} the mean RMSE across 50 independent runs for LDP posterior means and $L_2$ approximations of LDP point estimates. The true histogram parameters ([0.7,0.2,0.1]) were used as the ground truth for computing RMSE. \textbf{Bottom:} the fraction of unusable or misleading (trivial, unordered) solutions produced by the Bayesian and the point estimates model.}
    \label{fig:point_estimates_vs_posteriors_0.5}   
\end{figure}

\begin{figure}
    \centering
    \includegraphics[width=\columnwidth]{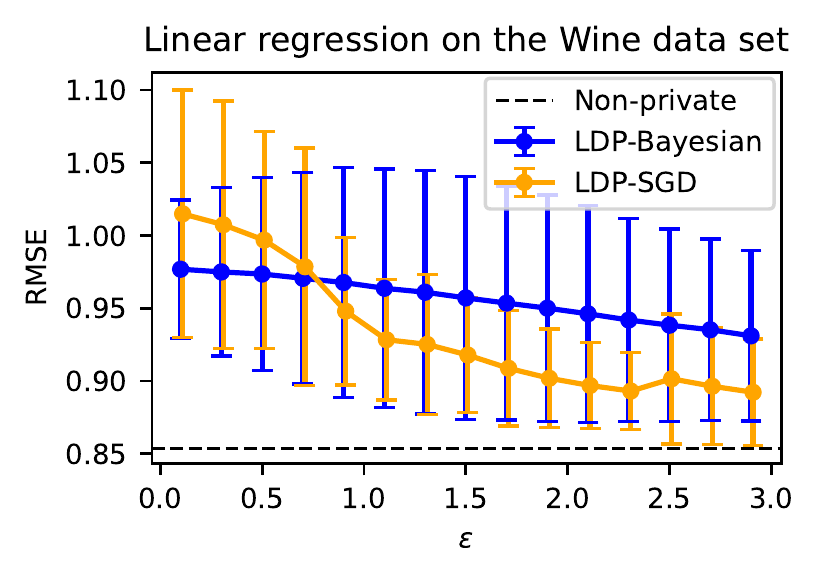}
    \caption{Linear regression: In the most important high-privacy region (small $\epsilon$) the sufficient statistics based Bayesian model outperforms the private baseline (LDP-SGD). However, as the $\epsilon$ increases the SGD based solution starts to perform better that the Bayesian approach. The results show the average test RMSE on Wine data set over 30 independent repeats of the inference and the errorbars denote the standard deviation among the repeats.\label{fig:linear_regression}}
\end{figure}

\begin{figure}
    \centering
    \includegraphics[width=\columnwidth]{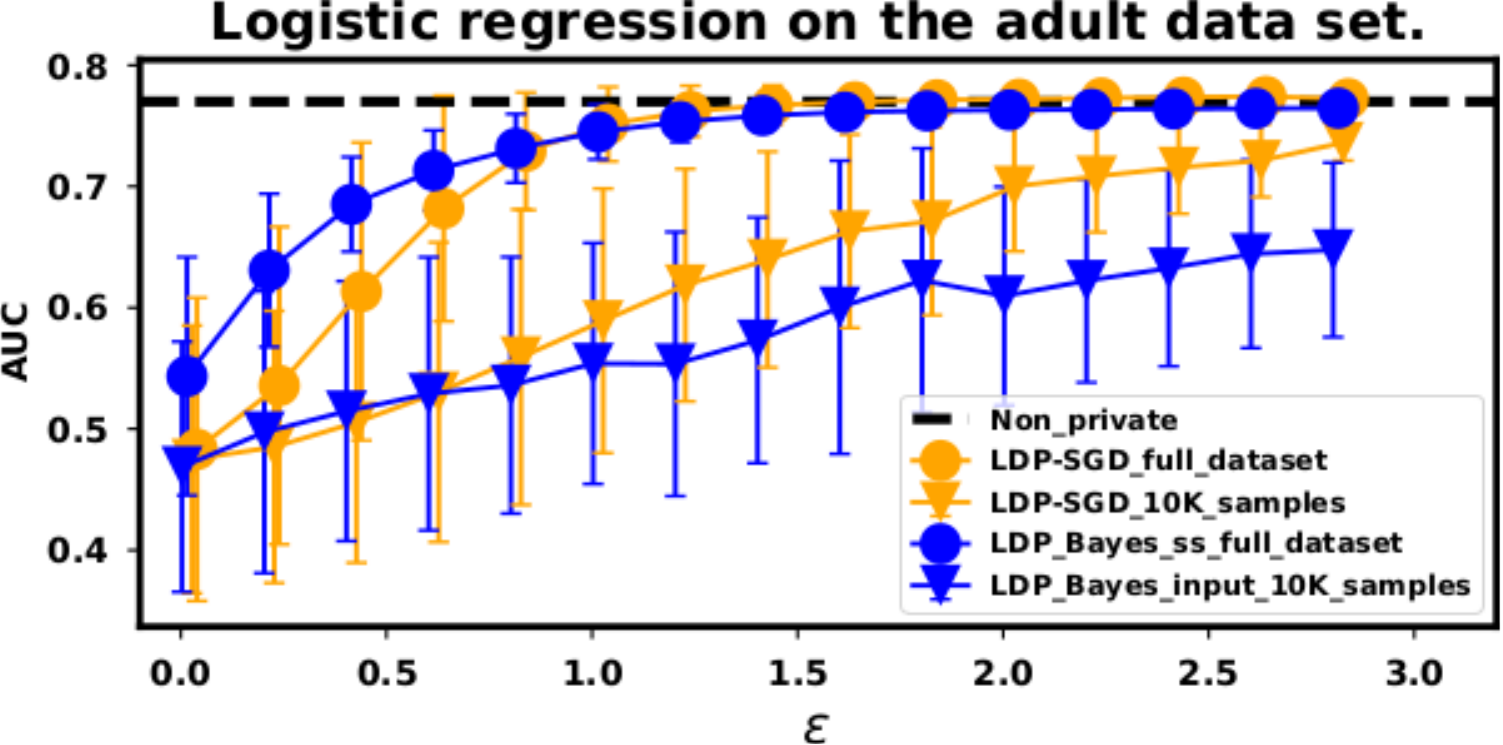}
    \caption{Logistic regression: The sufficient statistics based private logistic regression model almost matches the private and the non-private baselines as $\epsilon$ increases. The figure shows the average AUC for adult data set~\citep{Blake:98} for LDP-SGD and Bayesian models based on perturbed inputs and sufficient statistics. The mean is computed over 30 independent repeats of the inference. The errorbars are slightly shifted for readability and denote the standard deviation among the repeats.}
        \label{fig:logistic_regression_ss_input}
\end{figure}

\section{Discussion and concluding remarks}
In this work we have initiated the study of designing noise-aware models for performing Bayesian inference under LDP. Our models
are well calibrated and outperform the point estimations for small privacy/sample size regimes. 

\par With hierarchical modeling, these approaches easily extend to other interesting scenarios for pooled analysis between different sub-populations, such as personalized privacy
budgets \citep{CLQKJ16,JYC:15}, hybrid model \citep{AKZHL:19}, and multi-party differential privacy~\citep{Vadhan17} with minor modifications. Additionally, our models maintain full compatibility with more recent amplification techniques such as \emph{shuffling}~\citep{EFMRTT:19}.

\par Feature sub-sampling is routinely used in LDP protocols to amplify privacy and reduce communication. The current framework
does not efficiently model protocols involving sub-sampling because unsampled features in each input become the latent variables
of the model, thus exploding the model complexity. The question of modeling the uncertainty due to sub-sampling is left as a
future exercise. 

\section{Acknowledgement}
This work was supported by the Academy of Finland (Flagship programme: Finnish Center for Artificial Intelligence, FCAI) Grants 325572, 325573, the Strategic Research Council at the Academy of Finland (Grant 336032) as well as UKRI Turing AI World-Leading Researcher Fellowship, EP/W002973/1. We are grateful to the Aalto Science-IT project for their computational resources. 
\bibliography{papers}

\onecolumn
\aistatstitle{Locally Differentially Private Bayesian Inference}

\section{Chebyshev approximation}
 \label{sec:chebyshev_approx}
The polynomial expansion can be used to approximate a non-linear function by expressing it as a sum of monomials. Among several choices, Chebyshev polynomials \citep{mason2002chebyshev} are often chosen because the quality of approximation is uniform over any finite interval $[-R,R], R>0$.  
The Chebyshev polynomial expansion of degree $M$ for the sigmoid of $\bx^T \btheta$ is given below.
\begin{align}
 \frac{\exp(\bx^{T}\btheta)}{1+\exp(\bx^{T}\btheta)}  \approx \sum_{m=0}^{M} b_{m} (\bx^{T} \btheta)^{m} =\sum_{m=0}^{M} b_{m} \sum_{\be \in \mathbb{N}^{d}: \sum_{j} \be_{j}=m } {m \choose \be} (\btheta)^{\be}    ( \bx)^{\be}.     
\end{align}

In the above expression, $(\bx)^{\be}= \prod_{j \in [d]}x_j^{e_{j}}, \bx \in \mathbb{R}^{d},\be \in \mathbb{N}^{d}$. ${m \choose \be} = \frac{m!}{\prod_{j=1}^{d} e_j!}$ are the multinomial coefficients. The constants $b_{0},b_{1},\cdots,b_{M}$ are the Chebyshev coefficients computed over an arbitrary finite interval. In general, $M$ order approximation contains  ${d+M \choose d}$ summands. Figure~\ref{fig:ss} shows a 2nd order expansion of the sigmoid function.

\par{\textbf{Quality of approximation}}. We would like to empirically understand the quality of approximation for the expression $\mathbb{E}_{x \sim \mathcal{N}^{d}(\bzero,\bSigma)}( \frac{\exp(\bx^{T}\btheta)}{1+\exp(\bx^{T}\btheta)})$, for a fixed $\btheta \in \mathbb{R}^{d}$ and $\bSigma \in \mathbb{R}^{d\times d}$.
\begin{figure}[!ht]
  \centering
   \includegraphics[width=0.5\linewidth]{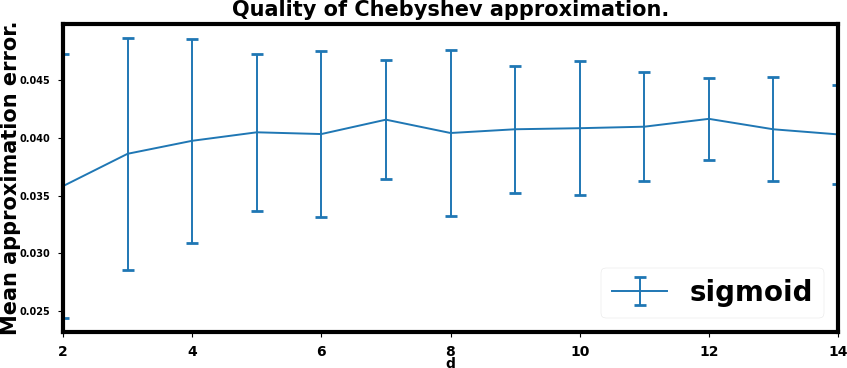}   
   \caption{We plot the mean absolute difference between empirical expectation of sigmoid($\bx^{T} \btheta$) and its 2nd order Chebyshev approximation with $b_0,b_1,b_2 \in [-3,3]$. The mean absolute deviation is computed over 50 independent runs. In each run, we sample a new $\btheta$ from standard normal distribution, and the empirical expectation is taken over $1000$ samples drawn from a zero centered multivariate Gaussian with a random full rank co-variance matrix. The co-variance matrix remains fixed for a given dimension for all runs. We also restrict the $L_2$ norm of these samples to 1. We can observe that the approximation is moderately accurate.}
         \label{fig:chebyshev_approx}
\end{figure}
\begin{figure*}
    \centering        
$\sigma(\bx^{T}\btheta) \approx b_0+ b_1x_1\theta_1+ b_1x_2\theta_2+b_1x_3\theta_3   + b_2x_1^{2}\theta^{2}_1 +b_2 x_2^{2}\theta^{2}_2+b_2x_3^{2}\theta^{2}_3  +   2b_2x_1 x_2 \theta_1\theta_2+2b_2x_1 x_3   \theta_1\theta_3 +2b_2x_2 x_3 \theta_2\theta_3$
\caption{An example of 2nd order Chebyshev expansion of the sigmoid function for $\bx, \btheta \in \mathbb{R}^{3}$.}
    \label{fig:ss}
\end{figure*}

\section{Priors for probabilistic models}
\begin{figure*}
\centering
\begin{tabular}{ |c|c|c| } 
\hline
Model & Prior   \\
\hline
Gaussian  & $\theta \sim \mathcal{N}(\mu_0,\sigma_0)$ and $\sigma \sim$ Gamma$(\alpha,\beta)$ \\ 
Exponential  & $\theta \sim$ Gamma$(\alpha,\beta)$   \\  
Multinomial & $\btheta \sim $Dirichlet($\{1\}^{d}$)  \\ 
Regression & $\btheta \sim \mathcal{N}(0,5\bI)$),  $\bSigma \sim$ scaled LKJ prior \\ 

\hline
\end{tabular}
\caption{Priors}
   \label{tab:priors}
\end{figure*}
\label{sec:LKJ}
\para{LKJ prior for $\bSigma$ \citep{lkj:09}}. We scale a positive definite correlation matrix from the LKJ correlation distribution of shape $\eta = 2$ from both sides with a diagonal matrix with $\mathcal{N}(0, 2.5)$ distributed diagonal entries. The probabilistic model is:
\begin{equation*}
\begin{aligned}
    \bOmega &\sim \text{LKJ}(2), \quad
    \boldsymbol{\tau} \sim \mathcal{N}(\mathbf{0}, 2.5 \cdot \mathbf{I}), \\
    \bSigma &= \text{diag}(\boldsymbol\tau) \, \Omega \, \text{diag}(\boldsymbol\tau).
\end{aligned}
\end{equation*}
\section{Missing Proofs}
\subsection{Proof of Lemma~\ref{lemma:sensitivity_linear_regression}}

\begin{proof}
\label{proof:sensitivity_linear_regression}
Let $\{\bx,y\},\{\bx',y'\}$ be the neighboring input pairs. We assume that $|| \bx ||_2 \leq R,|| \bx' ||_2 \leq R, |y|,|y'| \leq R_y$.
The Gaussian mechanism $\mathcal{M}$ for a single input $\{\bx,y\}$ is,
\begin{equation}
     \begin{aligned}
     \mathcal{M}(\bx)  &\sim \begin{bmatrix} \bt_2(\bx) \\ y\bt_1(\bx) \\ y^2  \end{bmatrix} + \mathcal{N}\left(0, \begin{bmatrix} \sigma_1^{2} \bI_{d_2} & 0 & 0 \\ 0 & \sigma_2^2 \bI_d & 0  \\
0 & 0 & \sigma_3^2   \\
 \end{bmatrix} \right) \\
  \mathcal{M}(\bx)  &  \sim \begin{bmatrix} \bI_d & 0 & 0 \\
  0 &  \frac{\sigma_2}{\sigma_1} \bI_{d_2} & 0  \\
  0 & 0 & \frac{\sigma_3}{\sigma_1}    \\
  \end{bmatrix} \left( \begin{bmatrix} \bt_2(\bx) \\ \frac{\sigma_1}{\sigma_2} y\bt_1(\bx) \\ \frac{\sigma_1}{\sigma_3} y^{2}  \end{bmatrix} +  \mathcal{N}(0, \sigma_1^2 \bI_{d_2+d+1} ) \right).
%   + %  \mathcal{N}\left(0, \begin{bmatrix} \sigma_1^2 I_d & 0 \\ 0 & \sigma_1^2 I_{d_2} \end{bmatrix} \right) \\
%      &  \sim \begin{bmatrix} I_d & 0 \\
%  0 &  \frac{\sigma_2}{\sigma_1} I_{d_2} 
%  \end{bmatrix} \left( \begin{bmatrix} t_1(\bx) \\ \frac{\sigma_1}{\sigma_2} t_2(\bx) \end{bmatrix} +   \mathcal{N}(0, \sigma_1^2 )  \right) \\
     \end{aligned}
 \end{equation}
Removing the constant scaling, we see that it is equivalent to the following mechanism.
\begin{equation*}
     \mathcal{M}(\bx) =  \begin{bmatrix} \bt_2(\bx) \\ \frac{\sigma_1}{\sigma_2} y\bt_1(\bx) \\ \frac{\sigma_1}{\sigma_3} y^{2}  \end{bmatrix}  +  \mathcal{N}(0, \sigma_1^2 \bI_{d_2+d+1} ).
 \end{equation*}

Define
$$
F(\bx,y) =  \begin{bmatrix} \bt_2(\bx) \\ \frac{\sigma_1}{\sigma_2} y\bt_1(\bx) \\ \frac{\sigma_1}{\sigma_3} y^{2}  \end{bmatrix}.
$$
 
 For the first order terms,
  \begin{equation} \label{eq:so}
   \begin{aligned}
 || y\bt_1(\bx) - y'\bt_1(\bx') ||^{2}_2  &= \sum_{j} (y x_j - y' x'_j)^{2} = y^2 \sum_{j} x^{2}_j + y'^2 \sum_{j} x'^{2}_j - 2yy' \sum_{j} x_j x'_j
\\    &\leq 2R^{2}_{y}R^2 - 2 yy'\langle \bx, \bx' \rangle.
  \end{aligned}
 \end{equation}

Similarly, $(y^{2}-y'^{2})^{2} \leq R_y^{4}$.
For the second order terms,
\begin{equation}
	\begin{aligned}
		\norm{\bt_2(\bx) - \bt_2(\bx')}_2^2 &= \sum\limits_{j} (x_j^2 - x_j'^2)^2 + 2 \sum\limits_{j>k} (x_j x_k -  x_j' x_k' )^2 \\
		&=  \sum\limits_{j} x_j^4 +  \sum\limits_{j} x_j'^4   - 2  \sum\limits_{j} x_j^2 x_j'^2 
		  + 2 \sum\limits_{j>k}  x_j^2 x_k^2  +   2 \sum\limits_{j>k}  x_j'^2 x_k'^2 - 4 \sum\limits_{j>k}  x_j x_k x_j' x_k' \\
		&=  \left( \sum\limits_{j} x_j^4 + 2 \sum\limits_{j>k}  x_j^2 x_k^2 \right) +  \left(  \sum\limits_{j} x_j'^4  +   2 \sum\limits_{j>k}  x_j'^2 x_k'^2 \right)
		 - 2  \sum\limits_{j} x_j^2 x_j'^2  - 4 \sum\limits_{j>k}  x_j x_k x_j' x_k'  \\
		&=  \left( \sum\limits_{j} x_j^4 + \sum\limits_{j \neq k}  x_j^2 x_k^2 \right) +  \left(  \sum\limits_{j} x_j'^4  +   \sum\limits_{j \neq k}  x_j'^2 x_k'^2 \right)
		 - 2  \sum\limits_{j} x_j^2 x_j'^2  - 2 \sum\limits_{j \neq k}  x_j x_k x_j' x_k'  \\
		&=  \norm{\bx}_2^4 +  \norm{\bx'}_2^4 - 2 \langle \bx, \bx' \rangle^2 \leq 2R^{4}- 2 \langle \bx, \bx' \rangle^2.
	\end{aligned}
\end{equation}

 %We make the factor $ 2R_y^{2} R^2 - 2yy' \langle \bx, \bx' \rangle $ as small as possible by choosing $y=y'=b$. 
Finally,
\begin{equation}
	\begin{aligned}
 \norm{F(\bx,y) - F(\bx',y')}^2_2  &= \norm{ \bt_2(\bx) - \bt_2(\bx') }^2_2 
+ \frac{\sigma_1^2}{\sigma_2^2} \norm{ y\bt_1(\bx) - y'\bt_1(\bx') }^2_2  + \frac{\sigma_1^2}{\sigma_3^2} (y^{2}- y'^{2})^2  \\ &\leq 2R^4 - 2 \langle \bx, \bx' \rangle^2 + \frac{\sigma_1^2}{\sigma_2^2} [2R_y^{2} R^2 - 2yy' \langle \bx, \bx' \rangle] + \frac{\sigma_1^2}{\sigma_3^2}R_y^{4} \\  &= 2R^4 - 2t^2+ 2c_1R_{y}^2  R^2  -2c_1 yy' t   +c_2R_y^{4}, \label{eq:quad}
	\end{aligned}
\end{equation}
 where $c_1 = \frac{\sigma_1^2}{\sigma_2^2}$, $c_2 = \frac{\sigma_1^2}{\sigma_3^2} $ and $t = \langle \bx, \bx' \big\rangle$. Above quadratic equation maximizes at $t=-\frac{c_1 yy'}{2}=-\frac{c_1 R_y^2}{2}$ and the maximum value is,
 \begin{equation}     
 \begin{aligned}
\Delta_2(\bt) = \norm{F(\bx,y) - F(\bx',y')}_2 \leq \sqrt{\frac{\sigma_1^{4} R_{y}^4 }{2\sigma_2^{4}} + 2R^4+ \frac{2\sigma_1^2 R_{y}^2 R^2}{\sigma_2^2}  + \frac{\sigma_1^2 R_y^{4}}{\sigma_3^2}}.    
 \end{aligned}
\end{equation}

 \end{proof}
%\subsection{Marginalizing the input for Gaussian noise model}

\subsection{Proof of Lemma~\ref{lemma:suff_condition_marginalization}}
\label{proof:suff_condition_marginalization}
\begin{proof}
    Let us write $g(x) = \sum_{j=0}^M g_j x^{j}$ and $h(x) = \sum_{i=0}^2 h_i x^{i}$. Now the integral in \eqref{eq:marg} becomes
    \begin{align}
        \Pr[z] = \int_a^b \Pr[z \mid x] \Pr[x] \dd x = C \frac{1}{\sqrt{2\pi \sigma^2}} \sum_{j=0}^{M} g_j \int_a^b x^j 
                    \exp{- \frac{1}{2\sigma^2} \left[(z-x)^2 - \sum_{i=0}^2 h_i x^i\right]} \dd x.
    \end{align}
    We can clearly see that the exponential term inside the integral is yet another Gaussian kernel of $x$. Next we write the 
    polynomial inside the exponential in a quadratic form:
\begin{equation} 
    \begin{aligned}
        (z-x)^2 - \sum_{i=0}^2 h_i x^i &= (1-h_2) x^2 - 2(z + \frac{h_1}{2}) x + z^2 - h_0 \\
        &=(1-h_2) \left[ 
                x^2 -2 \frac{(z + \frac{h_1}{2})}{(1-h_2)} x + \frac{(z+\frac{h_1}{2})^2}{(1-h_2)^2}
                - \frac{(z+\frac{h_1}{2})^2}{(1-h_2)^2}
            \right] + z^2 - h_0 \\
        &=(1-h_2)\left( x - \frac{z+\frac{h_1}{2}}{1-h_2} \right)^2 + z^2-h_0-\frac{(z+\frac{h_1}{2})^2}{1-h_2}.
\end{aligned}
\end{equation}
    Denote $s^2 = \frac{\sigma^2}{1-h_2}$ and $m = \frac{2z + h_1}{2-2h_2}$, we get
    \begin{align}
        \Pr[z] = C \frac{1}{\sqrt{2\pi \sigma^2}} \sqrt{2\pi s^2} \sum_{j=0}^{M} g_j \int_a^b x^j 
                   \mathcal{N}(x ; m, s^2) \exp(-\frac{1}{2\sigma^2}\left(z^2 - h_0 -\frac{(z+\frac{h_1}{2})^2}{1-h_2}\right))
                \dd x,
    \end{align}
    where $\mathcal{N}(\cdot; m, s^2)$ denotes the probability density function of a Gaussian with mean $m$ and variance $s^2$.
    Now, after removing the constant factors out of the integral, we are left with integral
    \begin{align}
        \int_a^b x^j \mathcal{N}(x ; m, s^2) \dd x = 
        \left( \Phi\left(\frac{b-m}{s}\right) - \Phi\left(\frac{a-m}{s}\right) \right) \mathbb{E}_{x\sim \text{TrunNorm(m,s,a,b)}}[x^j],
    \end{align}
    where $\Phi$ is the cumulative density function of a std. normal distribution. Now in order to conclude the proof, it suffices
    to show that the truncated normal distribution has the non-central moments in an tractable form, which has been shown in the past for example by \cite{flecher2010truncated}.
\end{proof}
\section{Likelihood calculations --- uni-variate parameter estimations.}
We denote $\Phi_{\btheta}(a) = \Pr[x \leq a]$ as the cumulative density function of the corresponding density function.
\subsection{Gaussian distribution}
\label{sec:Gaussian_distribution}
 We assume the distribution is rectified from both sides with $(a,b)
$ and data points are perturbed with $\epsilon$-DP Laplace noise. We intend to learn both mean $\mu$ and variance $\sigma^{2}$. Let $\btheta = [\mu,\sigma]$). 

\begin{align}
    \Pr[  z\mid \btheta] &= \int_{\mathbb{R}} \Pr[z, x \mid \btheta] dx  = \int_{\mathbb{R}} \Pr[z\mid x] \times \Pr[x \mid \btheta,a,b]  dx \\
    &= \frac{\epsilon}{2(b-a)} \int_{\mathbb{R}} \exp(- \frac{\epsilon|z-x|}{(b-a)}) \Pr[x \mid \btheta, a, b] dx \\ 
    &= \frac{\epsilon}{2(b-a)} \int_{\mathbb{R}} \exp(- \frac{\epsilon|z-x|}{b-a}) \Big[ \delta_a (x) \Phi_{\btheta}(a) + \mathbf{1}_{(a < x < b)} \Phi_{\btheta}(x) + \delta_b(x) (1-\Phi_{\btheta}(b)) \Big]dx \\
    &=\frac{\epsilon\Phi_{\btheta}(a)}{2(b-a)}\exp(-\frac{\epsilon|z-a|}{b-a}) + \frac{\epsilon(1-\Phi_{\btheta}(b))}{2(b-a)}\exp(-\frac{\epsilon|z-b|}{b-a}) \\ &+ 
    \frac{\epsilon}{2\sqrt{2\pi}(b-a) \sigma}\int_a^b \exp(- \frac{\epsilon|z-x|}{b-a})\exp(- \frac{(x-\mu)^2}{2\sigma^{2}})dx. \label{step:int}
\end{align}
Now we focus on the integral in step~\ref{step:int}. Since $x \in [a,b]$ and $z \in \mathbb{R}$, the sign of the term  $|z-x|$ depends on $z$. To simplify, we set limit $l= max(a,min(b,z))$.

\begin{equation} 
    \begin{aligned}
 & \int_{a}^{b} \exp(\frac{-\epsilon|x-z|}{(b-a)})\exp(- \frac{(x-\mu)^2}{2\sigma^{2}})dx \\ &=
    \int_{a}^{l} \exp(\frac{\epsilon(x-z)}{b-a})\exp(- \frac{(x-\mu)^2}{2\sigma^{2}})dx + \int_{l}^{b}
\exp(\frac{\epsilon(z-x)}{b-a})\exp(- \frac{(x-\mu)^2}{2\sigma^{2}})dx \\ &=
-\dfrac{\sqrt{{\pi}}{\sigma}\left(\operatorname{erf}\left(\frac{{\epsilon}{\sigma}^2+\left(b-a\right)\left({\mu}-l\right)}{\sqrt{2}\left(b-a\right){\sigma}}\right)-\operatorname{erf}\left(\frac{{\epsilon}{\sigma}^2+\left(b-a\right)\left({\mu}-a\right)}{\sqrt{2}\left(b-a\right){\sigma}}\right)\right)\exp(-\frac{{\epsilon}\left(2\left(b-a\right)\left(z-{\mu}\right)-{\epsilon}{\sigma}^2\right)}{2\left(b-a\right)^2})}{\sqrt{2}}
\\ &-\dfrac{\sqrt{{\pi}}{\sigma}\left(\operatorname{erf}\left(\frac{{\epsilon}{\sigma}^2-\left(b-a\right)\left({\mu}-l\right)}{\sqrt{2}\left(b-a\right){\sigma}}\right)-\operatorname{erf}\left(\frac{{\epsilon}{\sigma}^2-\left(b-a\right)\left({\mu}-b\right)}{\sqrt{2}\left(b-a\right){\sigma}}\right)\right)\mathrm{e}^\frac{{\epsilon}\left(2\left(b-a\right)\left(z-{\mu}\right)+{\epsilon}{\sigma}^2\right)}{2\left(b-a\right)^2}}{\sqrt{2}} 
\end{aligned}
\end{equation}
Putting this all together yields
\begin{equation} 
    \begin{aligned}
    \Pr[  z\mid \btheta] =  &
    \frac{\epsilon\Phi_{\btheta}(a)}{2(b-a)}\exp(-\frac{\epsilon|z-a|}{b-a}) +
    \frac{\epsilon(1-\Phi_{\btheta}(b))}{2(b-a)}\exp(-\frac{\epsilon|z-b|}{b-a})\\
    &+\frac{\epsilon}{4(b-a)}
    \mathrm{e}^{\frac{(\epsilon \sigma)^2}{2(b-a)^2}}
    \Bigg(
    \left(
    \operatorname{erf}\left(\frac{{\epsilon}{\sigma}^2+\left(b-a\right)\left({\mu}-a\right)}{\sqrt{2}\left(b-a\right){\sigma}}\right)-
    \operatorname{erf}\left(\frac{{\epsilon}{\sigma}^2+\left(b-a\right)\left({\mu}-l\right)}{\sqrt{2}\left(b-a\right){\sigma}}\right)
    \right)
    \exp(-\frac{{\epsilon}\left(z-{\mu}\right)}{(b-a)}) \\ 
    &+\left(
    \operatorname{erf}\left(\frac{{\epsilon}{\sigma}^2-\left(b-a\right)\left({\mu}-b\right)}{\sqrt{2}\left(b-a\right){\sigma}}\right)-
    \operatorname{erf}\left(\frac{{\epsilon}{\sigma}^2-\left(b-a\right)\left({\mu}-l\right)}{\sqrt{2}\left(b-a\right){\sigma}}\right)
    \right)
    \exp(\frac{{\epsilon}\left(z-{\mu}\right)}{(b-a)})
    \Bigg)
\end{aligned}
\end{equation}

\subsection{Exponential distribution}
\label{sec:Exponential_distribution}    
\begin{equation} 
    \begin{aligned}
     \Pr[ z \mid \theta,b ]  &= \int_{0}^{\infty} \Pr[z,x | \theta,b  ] dx =  \int_{0}^{\infty}   \Pr[z|x ] \times \Pr[x|\theta,b]  dx  \\ &=  \int_{0}^{\infty} \frac{\epsilon}{2b} \exp(\frac{-\epsilon|z-x|}{b}) \Big[  \mathbf{1}_{(x \leq b)} f(x)   + \delta_b(x) (1-\Phi_{\theta}(b)) \Big] dx \\ &=   \frac{\epsilon}{2b} \Big[ \int_{0}^{\infty} \exp(\frac{-\epsilon|z-x|}{b})  \theta \exp(-\theta x) dx + \int_{0}^{\infty}  \exp(\frac{-\epsilon|z-x|}{b}) \exp(-\theta b) \delta_{b}(x) dx \Big] 
\end{aligned}
\end{equation}
  Let us workout the first integration for the first case. The sign of $|z-x|$ may switch if $z <0$ and it may affect the outcome of the integral. So let's set the upper limit to a temporary limit $l$.  We set $l= max(0,min(b,z))$ to bound $z$ in $[0,b]$.

\begin{equation} 
    \begin{aligned}
& \int_{0}^{b} \exp(\frac{-|z-x|}{b/\epsilon})  \theta \exp(-\theta x) dx   =   \int_{0}^{l}\exp(\frac{x-z}{b/\epsilon})  \theta \exp(-\theta x) dx     +
         \int_{l}^{b}\exp(\frac{z-x}{b/\epsilon})  \theta \exp(-\theta x) dx \\ &=   
\dfrac{b \theta\left(\exp(l\theta)-\exp(\frac{l\epsilon}{b})\right)\exp(-\frac{\epsilon z}{b}-l\theta)}{b \theta- \epsilon} + \dfrac{b \theta\left(\exp(b\theta+ \epsilon)- \exp(l\theta+\frac{l\epsilon}{b})\right)\exp(\frac{\epsilon y-l\epsilon}{b}-b\theta-l\theta-\epsilon)}{b\theta+\epsilon}
\end{aligned}
\end{equation}

\section{Likelihood calculations for optimal unary encoding (OUE) }
\label{sec:OUE_likelihood}    
\begin{equation}
    \begin{aligned}
    \Pr[\bz, x=k \mid \btheta] &= \Pr[\bz \mid x=k] \Pr[x=k |\btheta]  \\ 
    &=\theta_k \times q^{\bz_{k}}(1-q)^{1-z_{k}} \prod_{j \neq k} p^{1-z_{j}} (1-p)^{z_{j}} \\ 
    &= \theta_k \times \left(\frac{1}{2}\right)^{z_{k}}\left(1-\frac{1}{2}\right)^{1-z_{k}} p^{\sum_{j \neq k} (1-z_{j})} (1-p)^{\sum_{j \neq k} z_{j} }
\end{aligned}
\end{equation}
Divide and multiply by $p^{1-z_{k}}(1-p)^{z_{k}}$.
\begin{align}
    & \Pr[\bz, x=k \mid \btheta] =  \theta_k  \frac{  p^{d- \sum_{j}z_{j}} (1-p)^{\sum_{j}z_{j}}}{2p^{1-z_{k}} (1-p)^{z_{k}}}
\end{align}
Let us marginalize the input $x$ out by summing over the domain.
\begin{equation}
    \begin{aligned}
    \Pr[\bz \mid \btheta] &= \sum_{k=1}^{d} \Pr[\bz,x=k \mid \theta] \\ 
    &= p^{d- \sum_{j}z_{j}} (1-p)^{\sum_{j}z_{j}} \sum_{k=1}^{d} \frac{\theta_k}{ 2p^{1-z_{k}} (1-p)^{z_{k}}}  \\ 
    &= p^{d} \left(\frac{1-p}{p}\right)^{\sum_{j} z_{j}} 
            \sum_{k=1}^{d} \frac{\theta_k}{ 2p } \left(\frac{1-p}{p}\right)^{-z_k} \\ 
    &= p^{d} \exp(-\epsilon \sum_{j} z_{j}) \sum_{k=1}^{d} \frac{\theta_k \exp( z_{k}\epsilon)}{2 p} 
\end{aligned}
\end{equation}
The last equality is due to the fact that $\frac{1-p}{p}= \exp(-\epsilon)$.

\section{Likelihood calculations for linear/logistic regression --- perturbed inputs}
Both models do not take any stand on how the sensitivity of the Gaussian mechanism was computed, or the way total privacy budget is split while perturbing $\bx$ and y.
\subsection{Linear regression}
\label{sec:linear_regression_input_likelihood}
Consider the following model where we observe data through Gaussian perturbation.

\begin{equation} 
    \begin{aligned}
    \bx &\sim \mathcal{N}(\mathbf{0}, \bSigma) \\
    \bz_{\bx} \mid \bx &\sim \mathcal{N}(\bx, \bSigma_*) \\
        y &\sim \mathcal{N}(\btheta^T\bx, \sigma^2) \\
    z_y  \mid y &\sim \mathcal{N}(y, \sigma_*^2).
    \end{aligned} 
\end{equation}
    
Next we show how to marginalize the latent inputs out from the model. We start by moving the $\bz_{\bx}$ out from the integral
\begin{align}
    \Pr[\bz_{\bx}, z_y] &= \int \int \Pr[\bx, \bz_{\bx}]\Pr[z_y, y \mid \bx] \dd\bx \dd y \\
    &= \mathcal{N}(\bz_{\bx}; \bzero, \bSigma + \bSigma_*)\int \int \Pr[\bx \mid \bz_{\bx}]\Pr[y \mid \bx]\Pr[\bz_y \mid y] \dd\bx \dd y \\
    &= \mathcal{N}(\bz_{\bx}; \bzero, \bSigma + \bSigma_*)\int \Pr[\bz_y \mid y]
        \int \mathcal{N}(\bx ; \underbrace{\bSigma(\bSigma + \bSigma_*)^{-1} \bz_{\bx}}_{:=\bh_{\bx}}, 
            \underbrace{\bSigma - \bSigma(\bSigma + \bSigma_*)^{-1}\bSigma}_{:=\bA})\Pr[y \mid \bx] 
                \dd\bx \dd y. \label{eq:linreg_marg_int}
\end{align}
Now note that since the $y$ depends on $\bx$ through $\btheta^T \bx$, the innermost integral can be written in terms of 
$\btheta^T \bx$, which in turn follows a Gaussian distribution $\mathcal{N}(\btheta^T h_{\bx}, \btheta^T\bA\btheta)$:
\begin{align}
    \int \mathcal{N}(\bx ; \bh_{\bx}, \bA)\Pr[(y \mid \bx] \dd\bx = 
        \mathbb{E}_{\bx \mid \bz_{\bx} \sim \mathcal{N}(\bh_{\bx}, \bA)}[\mathcal{N}(y; \btheta^T\bx, \sigma^2)] 
        &=
        \mathbb{E}_{\btheta^T\bx \mid \bz_{\bx} \sim \mathcal{N}(\btheta^T h_{\bx}, \btheta^T\bA\btheta)}
                        [\mathcal{N}(y; \btheta^T\bx, \sigma^2)] \\
        &=\mathcal{N}(y ; \btheta^T \bh_{\bx},  \btheta^T \bA \btheta + \sigma^2).
\end{align}
Finally, we substitute above into \eqref{eq:linreg_marg_int} and recover
\begin{align}
    \Pr[\bz_{\bx}, z_y] &= \mathcal{N}(\bz_{\bx}; \bzero, \bSigma + \bSigma_*)
        \int \Pr[\bz_y \mid y] \mathcal{N}(y ; \btheta^T h_{\bx},  \btheta^T \bA\btheta + \sigma^2) \dd y \\
        &= \mathcal{N}(\bz_{\bx}; \bzero, \bSigma + \bSigma_*) \mathcal{N}(\bz_y ; \btheta^T \bh_{\bx},  \btheta^T\bA\btheta + \sigma^2 + \sigma_*^2).
\end{align}

\subsection{Logistic regression}
\label{sec:logistic_regression_input_likelihood}
Consider the following model where we observe data through Gaussian and randomized response perturbation.
\begin{equation} 
    \begin{aligned}
    \bx &\sim \mathcal{N}(\mathbf{0}, \bSigma) \\
    \bz_{\bx} \mid \bx &\sim \mathcal{N}(\bx, \bSigma_*) \\
    y &\sim \bold{Bern}\Big(\frac{1}{1+\exp(-\bx ^{T} \btheta)}\Big) \\
    z_y  \mid y   &\sim \bold{Bern}(p).
    \end{aligned} 
\end{equation}

%Assume that $\{\bx,y\}$ is perturbed using Gaussian noise (with co-variance matrix $\bSigma_{*}$) and 1-bit RR with budgets $(1-c)\epsilon$ and $c\epsilon$ for  $c \in (0,1]$.

Next we show how to marginalize the latent inputs out from the model. %Assume $\bz_{\bx} | \bx \sim \mathcal{N}(\bx, \bSigma_{*})$ and $\bx \sim \mathcal{N}(\mathbf{0}, \bSigma)$.
Using the property A5 from \citep{sarkka:13}, we have
\begin{align*}
    \bz_{\bx} &\sim \mathcal{N}(\mathbf{0}, \bSigma_{*} + \bSigma)  \\
    \bx \mid \bz_{\bx} &\sim \mathcal{N}(\bSigma(\bSigma + \bSigma_{*})^{-1}\bz_{\bx}, \bSigma - \bSigma (\bSigma + \bSigma_{*})^{-1}\bSigma).
\end{align*}
We denote the posterior parameters with $\mathbf{h} = \bSigma(\bSigma + \bSigma_{*})^{-1}\bz_{\bx}$ and $\bA = \bSigma - \bSigma (\bSigma + \bSigma_{*})^{-1}\bSigma$. Now we can write

\begin{equation} 
    \begin{aligned}
 \Pr[\bz_{\bx},z_y|\btheta,\bSigma] &=    \int_{\Omega(\bX)  \Omega(\bY)}\Pr[\bz_{\bx},z_y,\bx,y|\btheta,\bSigma] d\Omega(\bX) d\Omega(\bY) \\ 
&= \int_{\Omega(\bX)  \Omega(\bY)}\Pr[\bx|\bz_{\bx}, \bSigma]  \Pr[z_y|y]\Pr[y |\bx,\btheta] \Pr[\bz_{\bx}|\bSigma] d\Omega(\bX) d\Omega(\bY) \\  
&= \Pr[\bz_{\bx} \mid \bSigma]\int_{\Omega(\bX)  \Omega(\bY)}\Pr[\bx|\bz_{\bx}, \bSigma]  \Pr[z_y|y]\Pr[y |\bx,\btheta]  d\Omega(\bX) d\Omega(\bY) \\  
&= \Pr[\bz_{\bx} \mid \bSigma]\int_{\Omega(\bX) }\Pr[\bx|\bz_{\bx}, \bSigma] \sum_{j \in \{1,0\}}p^{\mathbf{1}_{j=z_y}}(1-p)^{\mathbf{1}_{j\neq z_y}}\Pr[y=j | \bx,\btheta] d\Omega(\bX)\\  
&= \Pr[\bz_{\bx} \mid \bSigma]\int_{\Omega(\bX) }\Pr[\bx|\bz_{\bx}, \bSigma] \sum_{j \in \{1,0\}}p^{\mathbf{1}_{j=z_y}}(1-p)^{\mathbf{1}_{j\neq z_y}} \sigma(\bx^{T}\btheta)^{j}(1-\sigma(\bx^{T}\btheta))^{1-j} d\Omega(\bX) \\ 
&= \Pr[\bz_{\bx} \mid \bSigma]\int_{\Omega(\bX) }\Pr[\bx|\bz_{\bx}, \bSigma] \sum_{j \in \{1,0\}}(\frac{p}{1-p})^{\mathbf{1}_{j= z_y}} (1-p) \Big(\frac{\sigma(\bx^{T}\btheta)}{1-\sigma(\bx^{T}\btheta)} \Big)^{j}(1-\sigma(\bx^{T}\btheta)) d\Omega(\bX)\\ 
&= \frac{\Pr[\bz_{\bx} \mid \bSigma]}{{\exp(c\epsilon)+1}}\int_{\Omega(\bX) }  \sum_{j \in \{1,0\}}\exp(c\epsilon)^{\mathbf{1}_{j= z_y}}  \Pr[\bx|\bz_{\bx}, \bSigma] \Big[\frac{\exp(j\bx^{T}\btheta)}{1+\exp(\bx^{T}\btheta)}\Big] d\Omega(\bX) \\ &=
\Big( \frac{\mathcal{N}(\bz_{\bx};\bzero, \bSigma_{*}+\bSigma)}{{\exp(c\epsilon)+1}} \Big)\Big[ \exp(c\epsilon)^{\mathbf{1}_{z_y=1}} \mathbb{E}_{\bx \sim \mathcal{N}(\bh, \bA)}    \Big(\frac{\exp(\bx^{T}\btheta)}{1+\exp(\bx^{T}\btheta)}\Big)\\ &+ \exp(c\epsilon)^{\mathbf{1}_{z_y=0}} \mathbb{E}_{\bx \sim \mathcal{N}(\bh, \bA)}    \Big(\frac{1}{1+\exp(\bx^{T}\btheta)}\Big) \Big]  
\end{aligned}
\end{equation}

 To remove the non-linearity due to sigmoid, we employ Chebyshev approximation of order two. Section~\ref{sec:chebyshev_approx} introduces the Chebyshev approximation. 

\begin{align}
\label{eq:cheb}
\mathbb{E}_{\bx \sim \mathcal{N}(\bh, \bA)}    \Big(\frac{\exp(\bx^{T}\btheta)}{1+\exp(\bx^{T}\btheta)}\Big)  \approx b_0 + b_1\mathbb{E}_{\bx \sim \mathcal{N}^{d} (\bh,\bA)}[\bx^{T}\btheta] + b_2 \mathbb{E}_{\bx \sim \mathcal{N}^{d} (\bh,\bA)}[(\bx^{T}\btheta)^2]  
\end{align}
In the above expressions, $b_{0},b_1,b_2\in \mathbb{R}$ are the Chebyshev coefficients.
We can easily solve the expectations in \ref{eq:cheb} as below.

\begin{align*}
\mathbb{E}_{\bx \sim \mathcal{N}^{d}(\bh,\bA)}[x_j]&= h_j, \forall j \in [d] \\
\mathbb{E}_{\bx \sim \mathcal{N}^{d}(\bh,\bA)}[x_jx_k]  &=  A_{jk}  +h_jh_k, \forall j,k \in [d]   
\end{align*}
Similarly, we can approximate $\mathbb{E}_{\bx \sim \mathcal{N}(\bh, \bA)}    \Big(\frac{1}{1+\exp(\bx^{T}\btheta)}\Big)$ also.

\para{Sensitivity calculation for $\bx$.} Let  $\bx, \bx' \in \mathbb{R}^{d}, || \bx||_2\leq R,||  \bx'||_2 \leq R$ be the neighboring inputs. 
  \begin{equation} \label{eq:so}
   \begin{aligned}
 || \bx - \bx' ||^{2}_2  = \sum_{j} ( x_j -  x'_j)^{2} =  \sum_{j} x^{2}_j + \sum_{j} x'^{2}_j - 2 \sum_{j} x_j x'_j
\leq 2R^2 - 2 \langle \bx, \bx' \rangle  =4R^{2}.
  \end{aligned}
 \end{equation}

\section{Calculations for linear/logistic regression --- sufficient statistics}

\subsection{Sufficient statistics based posterior inference under LDP}
\label{sec:posterior_calculations_GLM}
Assume that we are directly modelling the sum of locally perturbed sufficient statistics $\bZ=\bS+\boldEta$ as a sum of two sums $\bS=\sum_{i} \bs_i, \boldEta= \sum_{i} \bzeta_i$. $\bzeta_i \in \mathcal{N}^{{d+2 \choose d}}(\bzero,\bSigma_{s}), i \in [N]$. Let $\btheta_,\bSigma$ be the parameters of a regression task at hand and $\bmu_{s},\bSigma_{s}$ be the moments of normal approximation of $\bS$. $\bSigma_{*}$ is the diagonal co-variance  matrix of Gaussian noise. 

\begin{equation}
\begin{aligned}
 \Pr[\btheta, \bSigma| \bZ] \propto \Pr[\btheta, \bSigma, \bZ] &=\int_{\Omega(\bS)} \Pr[\btheta,\bSigma,\bS, \bZ] \dd\bS \\ &= \int_{\Omega(\bS)} \Pr[\btheta] \Pr[\bSigma] \Pr[\bS\mid \btheta,\bSigma]  \Pr[\bZ \mid \bS] \dd\bS \\  &= \Pr[\btheta] \Pr[\bSigma] \int_{\Omega(\bS)} \Pr[\bZ \mid \bS]  \Pr[\bS\mid \btheta,\bSigma]   \dd\bS \\ & \approx \Pr[\btheta] \Pr[\bSigma] \int_{\Omega(\bS)}  \mathcal{N} (\bZ; \bS, N\bSigma_{*}) \mathcal{N} (\bS; \bmu_{s}, \bSigma_{s}) \dd \Omega(\bS) \\ &= \Pr[\btheta] \Pr[\bSigma] \int_{\Omega(\bS)}  \Pr[\bZ, \bS|\bSigma_{s},\bmu_{s},N\bSigma_{*}] \dd \Omega(\bS) \\ &=   \Pr[\btheta]\Pr[\bSigma] \Pr[\bZ|N\bSigma_{*},\bmu_{s},\bSigma_{s}] \\ &= \Pr[\btheta] \Pr[\bSigma]\mathcal{N}(\bZ;\bmu_{s},\bSigma_{s}+N\bSigma_{*})
 \label{eq:priv_model_2} 
\end{aligned}
\end{equation}

\subsection{Logistic regression}\label{sec:sufficient_statistics_lr} Given $\bx \in \mathbb{R}^{d}, y \in \{-1,1\}$, \cite{PASS:17} show that the approximate sufficient statistics for logistic regression are 
\begin{align*}
\bt(\bx,y)=[1,vec(\bx \bx^{T})y^{2},\bx y].    
\end{align*}

We define the functions $\bt_1:\mathbb{R}^{ d} \rightarrow \mathbb{R}^{{d+2 \choose 2}}$ and $\bt_2:\mathbb{R}^{ d} \rightarrow \mathbb{R}^{d}$
 as
\begin{equation} \label{eq:t_12}
    \begin{aligned}
             \bt_1(\bx) &= \bx, \\
        \bt_2(\bx) &= \begin{bmatrix} x_1^2,  \ldots , x_d^2, 
          x_1x_2, \ldots ,  x_{d-1} x_d \end{bmatrix}^T.  
    \end{aligned}
\end{equation}
We use the following results from \cite{KJKKH:21}.

\begin{lemma} \label{lem:t_12}
Let $\bt_1$ and $\bt_2$ be defined as in \eqref{eq:t_12} and let $\sigma_1,\sigma_2>0$.
Let $\bs_1 =  y \bt_1(\bx)$ and $\bs_2 =  \bt_2(\bx)$. Consider the mechanism 
$$
\mathcal{M}(\bs) = \begin{bmatrix} \bs_1 \\ \bs_2  \end{bmatrix} + 
\mathcal{N}\left(0, \begin{bmatrix} \sigma_1^2 \bI_d & 0 \\ 0 & \sigma_2^2 \bI_{d_2} \end{bmatrix} \right),
$$
where $d_2 = {d + 2 \choose 2}$. 
Assuming $|| \bx ||_2 \leq R$, the tight $(\epsilon,\delta)$-DP for $\mathcal{M}$ 
is obtained by considering a Gaussian mechanism with noise variance $\sigma_1^2$ and sensitivity
$$
\Delta_{2}(\bt) = \sqrt{  \frac{\sigma_2^2}{2 \sigma_1^2} + 2 R^2 + 2 \frac{\sigma_1^2}{ \sigma_2^2} R^4    }.
$$
\end{lemma}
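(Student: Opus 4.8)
The plan is to mirror the argument already carried out for Lemma~\ref{lemma:sensitivity_linear_regression}, exploiting the fact that here $y \in \{-1,1\}$, so $y^2 = 1$: the constant coordinate of $\bt$ has zero sensitivity and the quadratic block $\bt_2(\bx)$ carries no $y$-dependence. The first step is to collapse the two-variance Gaussian mechanism $\mathcal{M}(\bs)$ into a single-variance one. Factoring out the data-independent diagonal matrix $\mathrm{diag}(\bI_d, \tfrac{\sigma_2}{\sigma_1}\bI_{d_2})$ and invoking the post-processing invariance of DP, the privacy of $\mathcal{M}$ equals that of the Gaussian mechanism
$$
F(\bx,y) + \mathcal{N}(0, \sigma_1^2 \bI_{d+d_2}), \qquad
F(\bx,y) := \begin{bmatrix} y\,\bt_1(\bx) \\ \tfrac{\sigma_1}{\sigma_2}\,\bt_2(\bx) \end{bmatrix}.
$$
Since this rescaling is an exact bijection, computing $\Delta_2(F) := \max \norm{F(\bx,y)-F(\bx',y')}_2$ yields the \emph{tight} $(\epsilon,\delta)$-DP through the analytic Gaussian mechanism, whose guarantee depends on the pair $(\sigma_1, \Delta_2)$ only through the ratio $\Delta_2/\sigma_1$.

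Next I would expand $\norm{F(\bx,y)-F(\bx',y')}_2^2$ as the first-order block $\norm{y\bt_1(\bx)-y'\bt_1(\bx')}_2^2$ plus the $\tfrac{\sigma_1^2}{\sigma_2^2}$-weighted second-order block $\norm{\bt_2(\bx)-\bt_2(\bx')}_2^2$. Both bounds can be imported verbatim from the proof of Lemma~\ref{lemma:sensitivity_linear_regression}: with $y^2 = y'^2 = 1$ and $\norm{\bx}_2,\norm{\bx'}_2 \le R$, the first block is at most $2R^2 - 2yy'\langle\bx,\bx'\rangle$, and the second block equals $\norm{\bx}_2^4 + \norm{\bx'}_2^4 - 2\langle\bx,\bx'\rangle^2 \le 2R^4 - 2\langle\bx,\bx'\rangle^2$. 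Writing $c = \tfrac{\sigma_1^2}{\sigma_2^2}$ and $t = \langle\bx,\bx'\rangle$, this gives
$$
\norm{F(\bx,y)-F(\bx',y')}_2^2 \le 2R^2 + 2cR^4 - 2yy'\,t - 2c\,t^2.
$$

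The final step is to maximize the right-hand side, a downward-opening quadratic in $t$ with unconstrained maximizer $t^* = -\tfrac{yy'}{2c}$, at which the $t$-dependent part equals $\tfrac{(yy')^2}{2c} = \tfrac{1}{2c} = \tfrac{\sigma_2^2}{2\sigma_1^2}$, using $(yy')^2 = 1$. Substituting back recovers
$$
\Delta_2(\bt) = \sqrt{\frac{\sigma_2^2}{2\sigma_1^2} + 2R^2 + 2\frac{\sigma_1^2}{\sigma_2^2}R^4}.
$$
The main obstacle, and the only genuine subtlety beyond the linear-regression case, is justifying the \emph{equality} rather than a mere upper bound: one must check that the unconstrained maximizer is feasible, i.e.\ $|t^*| = \tfrac{\sigma_2^2}{2\sigma_1^2} \le R^2$ by Cauchy--Schwarz, and then exhibit points $\bx,\bx'$ on the sphere of radius $R$ realizing simultaneously the inner product $t^*$ and the norm bounds $\norm{\bx}_2 = \norm{\bx'}_2 = R$ (which tighten both blocks at once). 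When the maximizer is infeasible, the maximum migrates to the boundary $t = \pm R^2$ and the closed form would need adjustment; verifying that the stated regime is the intended one is the point requiring the most care.
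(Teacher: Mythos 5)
Your proof is correct and follows essentially the same route as the paper: the paper itself does not prove this lemma (it is imported from \cite{KJKKH:21}), but its proof of the linear-regression analogue, Lemma~\ref{lemma:sensitivity_linear_regression}, uses exactly your argument --- factor out the diagonal scaling, expand the squared distance blockwise, and maximize the resulting downward-opening quadratic in $t=\langle \bx,\bx'\rangle$ --- and your derivation is the correct specialization of it to $y^2=y'^2=1$ with the $y^2$-component dropped. Your closing caveat is also well taken: the stated equality (rather than an upper bound) is exact only when the unconstrained maximizer is feasible, i.e.\ $\sigma_2^2/(2\sigma_1^2)\le R^2$, which does hold in the regime the paper actually uses (Corollary~\ref{cor:logistic_regression} with $R=1$, $\sigma_1=\sigma_2$).
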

We use the following consequence of Lemma~\ref{lem:t_12}.

\begin{corollary}
\label{cor:logistic_regression}
In the special case $R=1$ and $\sigma_1 = \sigma_2 = \sigma$, by Lemma~\ref{lem:t_12}, the optimal $(\epsilon,\delta)$ is obtained by considering the Gaussian mechanism
with noise variance $\sigma^2$ and sensitivity $\Delta_{2}(\bt) = \sqrt{ 4 \tfrac{1}{2} }$.
\end{corollary}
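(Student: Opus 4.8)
The plan is to invoke Lemma~\ref{lem:t_12} directly and specialize its sensitivity formula to the stated parameter regime; there is essentially no additional argument beyond substitution and arithmetic. First I would record that under the hypotheses $R=1$ and $\sigma_1=\sigma_2=\sigma$ the three dimensionless quantities appearing in the lemma collapse: $\frac{\sigma_2^2}{\sigma_1^2}=1$, $\frac{\sigma_1^2}{\sigma_2^2}=1$, and every power of $R$ equals $1$. In particular the noise variance that governs the tight accounting is $\sigma_1^2=\sigma^2$, exactly as the corollary asserts.

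Next I would substitute these values termwise into the radicand of Lemma~\ref{lem:t_12}. The first term $\frac{\sigma_2^2}{2\sigma_1^2}$ becomes $\tfrac12$, the second term $2R^2$ becomes $2$, and the third term $2\frac{\sigma_1^2}{\sigma_2^2}R^4$ becomes $2$. Summing gives $\tfrac12+2+2=\tfrac92=4\tfrac12$, and taking the square root yields $\Delta_2(\bt)=\sqrt{4\tfrac12}$, which matches the claimed expression.

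The ``hard part'' here is genuinely absent at the level of the corollary: it is a verbatim specialization of Lemma~\ref{lem:t_12}, whose own proof — which carries out the diagonal rescaling reducing the heteroscedastic Gaussian mechanism to a single-variance one and then maximizes the squared distance $\norm{F(\bx,y)-F(\bx',y')}_2^2$ over neighbouring inputs subject to $\norm{\bx}_2\le R$ — is where all the real work lives. The only point I would verify explicitly is that imposing $\sigma_1=\sigma_2$ does not break the reduction used in the lemma; since that reduction needs only $\sigma_1,\sigma_2>0$, the equal-variance case is admissible, and nothing beyond the arithmetic above is required.
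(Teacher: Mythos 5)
Your proposal is correct and matches the paper's treatment exactly: the corollary is a verbatim specialization of Lemma~\ref{lem:t_12}, and substituting $R=1$, $\sigma_1=\sigma_2=\sigma$ gives $\tfrac12+2+2=\tfrac92$, i.e.\ $\Delta_2(\bt)=\sqrt{4\tfrac12}$ with the governing noise variance $\sigma^2$. The paper offers no further argument, so your check that the lemma's reduction only requires $\sigma_1,\sigma_2>0$ is the right (and only) point of care.
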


\section{Additional Plots}

%\begin{figure}
%    \centering
%    \includegraphics[width=\columnwidth]{figures/AUC_adult_data.pdf}
%    \caption{Logistic regression: Figure shows AUC computed over $450$ test points for several $\epsilon$ and $c$ values for the adult dataset consisting $3000$ points with 5 features. }
%    \label{fig:logistic_regression_input_c}
%\end{figure}

\begin{figure*}[!ht]
  \centering
   \includegraphics[width=1.0\linewidth]{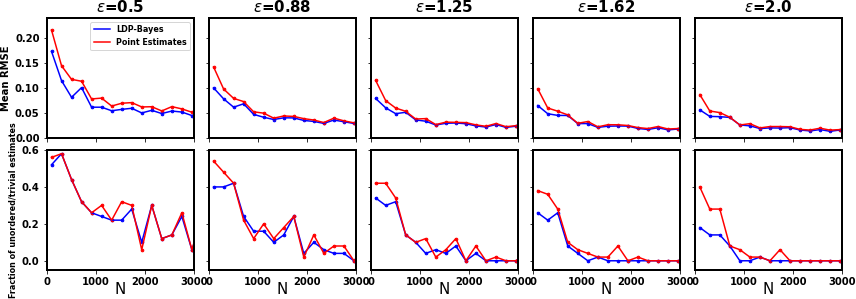}   
   \caption{The top plot compares the mean RMSE across 50 independent runs for for LDP posterior means  and $L_2$ approximations of LDP point estimates.The true histogram parameters ([0.7,0.2,0.1]) as used as the ground truth for computing RMSE.
   The bottom plot shows the fraction of unusable (trivial, unordered) solutions produced by the Bayesian and the point estimate model. }
      \label{fig:point_estimates_vs_posteriors}
\end{figure*}

\begin{figure}[!bt]
    \centering
    \includegraphics[scale=0.5]{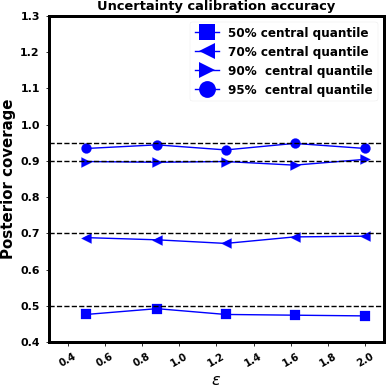}   
    \caption{Histogram discovery: We repeat the inference 100 times for $N=1000$ and $d=6$. For each run, we add-up the number of true $\btheta_i, i \in [d]$'s in 50\%, 70\%, 90\%, and 95\% posterior mass, and divide the final count across executions by $100d$.}
%    \caption{Histogram discovery: We repeat the inference for 100 times for multinomially distributed data perturbed with OUE for various $\epsilon$ values. In each independent execution, we generate $1000$ samples from multinomial distribution with random $\btheta$ of $d=6$  drawn from a Dirichlet distribution as prior. For each run, we add-up the number of true $\btheta_i, i \in [d]$'s in 50\%, 70\%, 90\%, and 95\% posterior mass, and divide the final count across executions by $100d$. Only a small fraction of  posteriors for naive $\theta$ made any of the quantiles.}
         \label{fig:OUE}
\end{figure}

%\begin{figure*}[!ht]
%  \centering
  % \includegraphics[width=1.\linewidth]{adult_LDP_1.png}   
 % \caption{Logistic regression: Comparison of non-private and DP posteriors for the adult dataset with $N= 48K, d= 7$ for various $\epsilon$s. The posterior std. deviation reduce as we increase $\epsilon$. }
   %           \label{fig:lr_adult_data_posteriors}
%\end{figure*}

\begin{figure}[!bt]
    \centering
    \includegraphics[scale=0.5]{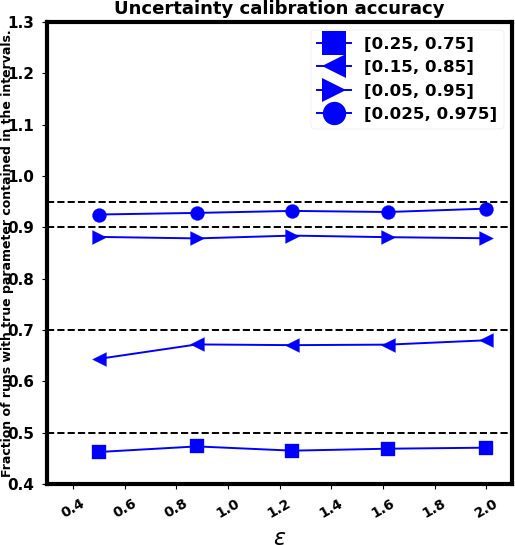}   
    \caption{Exponential parameter discovery: For $N=1000$, we repeat the inference for 80 times for Exponentially distributed data (clipped to $5$) perturbed with Laplace noise for various $\epsilon$ values. In each independent run, we draw $\theta$ from prior. We compute the fraction of runs that included the true $\btheta$ in 50\%, 70\%, 90\%, and 95\% posterior mass. The legend shows the quantile intervals that captures 50\%, 70\%, 90\%, and 95\% posterior mass.}
    \label{fig:exponential}
\end{figure}

\end{document}